\newcommand\footnoteref[1]{\protected@xdef\@thefnmark{\ref{#1}}\@footnotemark}
\def\eqref#1{equation~\ref{#1}}
\def\1{\bm{1}}
\DeclareMathAlphabet{\mathsfit}{\encodingdefault}{\sfdefault}{m}{sl}
\SetMathAlphabet{\mathsfit}{bold}{\encodingdefault}{\sfdefault}{bx}{n}
\newcommand{\E}{\mathbb{E}}
\DeclareMathOperator*{\argmin}{arg\,min}
\newcommand{\norm}[1]{ \left\| #1 \right\| }
\newcommand{\D}{\mathcal{D}_\mathsf{real}}
\newcommand{\Dsyn}{\mathcal{D}_\mathsf{syn}}
\newcommand{\fsyn}{f_{\theta_\mathsf{syn}}}
\newcommand{\fref}{f_{\theta_T}}
\newcommand{\freal}{f_{\theta_\mathsf{real}}}
\newcommand{\tsyn}{{\theta_\mathsf{syn}}}
\newcommand{\asyn}{\alpha_{\mathsf{syn}}}
\newcommand{\Lssl}{\mathcal{L}_{\mathsf{SSL}}}
\newcommand{\Lmse}{\mathcal{L}_{\mathsf{MSE}}}
\newcommand{\Zreal}{\mathcal{Z}_{\mathsf{real}}}
\newcommand{\Zsyn}{\mathcal{Z}_{\mathsf{syn}}}
\newcommand{\x}{\pmb{x}}
\newcommand{\Dd}{\mathcal{D}_d}
\newcommand{\ldd}{\mathcal{L}_{DD}(\Dsyn)}
\newcommand{\err}{\text{Err}}
\newcommand{\method}{{MKDT}}
\newcommand{\Zt}{\mathcal{Z}_T}
\newcommand{\fsk}{f_{\theta^{*}}}
\newcommand{\rev}[1]{{#1}}
\title{Dataset Distillation via Knowledge Distillation: Towards Efficient Self-Supervised Pre-training of Deep Networks}
\author{
    Siddharth Joshi,~
    Jiayi Ni,
    Baharan Mirzasoleiman \\
    ~Department of Computer Science, \\
    ~University of California, Los Angeles \\
    ~\texttt{sjoshi804@cs.ucla.edu}, \texttt{nijiayi1119626@g.ucla.edu},
    \texttt{baharan@cs.ucla.edu}
}
\begin{document}

\theoremstyle{plain}
\newtheorem{theorem}{Theorem}[section]
\newtheorem{proposition}[theorem]{Proposition}
\newtheorem{lemma}[theorem]{Lemma}
\newtheorem{corollary}[theorem]{Corollary}
\theoremstyle{definition}
\newtheorem{definition}[theorem]{Definition}
\newtheorem{assumption}[theorem]{Assumption}
\theoremstyle{remark}
\newtheorem{remark}[theorem]{Remark}

\maketitle

\vspace{-5mm}
\begin{abstract}
Dataset distillation (DD) generates small synthetic datasets that can efficiently train deep networks with a limited amount of memory and compute. Despite the success of DD methods for supervised learning, DD for self-supervised pre-training of deep models has remained unaddressed. Pre-training on unlabeled data is crucial for efficiently generalizing to downstream tasks with limited labeled data. In this work, we propose the first effective DD method for SSL pre-training. First, we show, theoretically and empirically, that na\"ive application of supervised DD methods to SSL fails, due to the high variance of the SSL gradient. Then, we address this issue by relying on insights from knowledge distillation (KD) literature. Specifically, we train a small student model to match the representations of a larger teacher model trained with SSL. Then, we generate a small synthetic dataset by matching the training trajectories of the student models. As the KD objective has considerably lower variance than SSL, our approach can generate synthetic datasets that can successfully pre-train high-quality encoders. Through extensive experiments, we show that our distilled sets lead to up to 13\% higher accuracy than prior work, on a variety of downstream tasks, in the presence of limited labeled data. Code at \url{https://github.com/BigML-CS-UCLA/MKDT}\looseness=-1
\end{abstract}

\section{Introduction}

Dataset distillation (DD) aims to generate a very small set of synthetic images that can simulate training on a large image dataset, with extremely limited memory and compute \citep{dd_orig}. This facilitates training models on the edge, speeds up continual learning, 
and provides strong privacy guarantees \citep{idc,mtt,sajedi2023datadam,dong2022privacy}. As a result, there has been a surge of interest in developing better DD methods for training neural networks in a supervised manner. 
However, in many applications, very few labeled example are available. In this case, supervised models often fail to generalize well. Instead, models are pre-trained, using self-supervised learning (SSL), on a large amount of \textit{unlabeled} data and then adapted to the downstream task using the limited labeled data by training a linear classifier using the labeled examples of each downstream task (linear probe). Remarkably, \cite{chen2020simple} showed that SSL pre-training, followed by linear probe, can outperform Supervised Learning (SL) by nearly 30\% on ImageNet \citep{imagenet} when only 1\% of labels are available. More impressively, by only training the linear layer (linear probe), SSL pre-training is able to generalize to a variety of downstream tasks nearly as effectively as full fine-tuning on the downstream task, for a fraction of the cost. Thus, SSL pre-training's benefits are invaluable in today's modern ML ecosystem, where unlabeled data is plentiful and it is essential to generalize to a plethora of downstream tasks, effectively and efficiently. With the datasets for SSL being far larger than those for SL, the computational and privacy benefits of dataset distillation for SSL are even more important than they are for SL. Nevertheless, DD for SSL pre-training has remained an open problem.\looseness=-1

DD for SSL is, however, very challenging. One needs to ensure that pre-training on the synthetic dataset, distilled from unlabeled data, results in a encoder that yields high-quality representations for a variety of downstream tasks. Existing DD methods for SL generate synthetic data by matching gradients \citep{zhao_dc, zhao_dsa, dcc} or trajectory \citep{mtt, ftd, tesla} of training on distribution of real data \citep{dm,sajedi2023datadam, cafe}, or meta-model matching by generating synthetic data such that training on it achieves low loss on real data \citep{dd_orig,kip, rfad, frepo}. Among them, gradient and distribution matching methods heavily rely on labels and will suffer from representation collapse otherwise. Hence, they are not applicable to SSL DD.
Very recently, \citet{lee2023selfsupervised} applied meta-model matching to generate synthetic examples for SSL pre-training, and evaluated its performance by fine-tuning the entire pre-trained model on the large labeled downstream datasets.  However, we show that SSL pre-training on these distilled sets does not provide any advantage over SSL pre-training on random real examples. 




In this work, we address distilling small synthetic datasets for SSL pre-training via trajectory matching. First, we show, theoretically and empirically, that na\"ive application of trajectory matching to SSL fails, due to the high variance of the gradient of the SSL loss. Then, we rely on insights from \textit{knowledge distillation} (KD) to considerably reduce the variance of SSL trajectories. KD trains a smaller student network to match the predictions of a larger teacher network trained with supervised learning \citep{hinton2015distilling}. In doing so, the student network can match the performance of the larger teacher model. 

Here, we apply KD for SSL by training a student {encoder} to match the \textit{representations} of a larger teacher {encoder} trained with SSL. Then, we propose generating synthetic data for SSL by {M}atching {K}nowledge {D}istillation {T}rajectories (\method). Crucially, as the KD objective for training the student model has considerably lower variance, it enables generating higher-quality synthetic data by matching the lower-variance trajectories of the \textit{student} model. As a result, the encoder can learn high-quality representations from the synthetic data. We also provide theoretical and empirical evidence showing that KD trajectories are indeed lower variance than SSL trajectories and that this lower variance enables effective dataset distillation for SSL.

    Finally, we conduct extensive experiments to validate the effectiveness of our proposal \method\ for SSL pre-training. In particular, we distill both low resolution (CIFAR10, CIFAR100) and larger, high resolution datasets (TinyImageNet) down to 2\% and 5\% of original dataset size and show that, across various downstream tasks, \method\ distilled sets outperform all baselines by up to 13\% in the presence of limited labeled data. Moreover, we confirm that the datasets distilled with \rev{smaller ConvNets can transfer to architectures as large as ResNet-18}. Finally, we demonstrate that \method\ is effective across SSL algorithms (BarlowTwins \citep{zbontar2021barlow} and SimCLR \citep{chen2020simple}). \looseness=-1

\section{Related Work}\label{sec:rel_work}

\subsection{Dataset Distillation}

There has been a large body of recent work on dataset distillation for supervised learning. These techniques can be broadly characterized into meta-model matching, gradient matching, distribution matching and trajectory matching \citep{sachdeva2023data}. \looseness=-1
 
\textbf{Meta-model Matching} Meta-model matching generates synthetic data such that a model trained on the synthetic dataset achieves low training loss on the real dataset \citep{dd_orig}. The traditional meta-model matching approach is computation and memory inefficient as it requires solving a bi-level optimization problem. Thus, several methods \citep {kip, rfad, frepo} have been proposed to solve the inner-optimization problem in closed form with kernel ridge regression. 

\textbf{Gradient Matching} Gradient matching generates synthetic data by matching the gradient of a network trained on the original dataset with the gradient of the network trained on the synthetic dataset \citep{zhao_dc, zhao_dsa, dcc}. Gradient-matching is done for each class separately, otherwise optimizing the synthetic images to match gradients is not possible \citep{zhao_dc}. As a result, these methods require labels to be applicable. \looseness=-1

\textbf{Matching Training Trajectories (MTT)} MTT, first proposed by \citet{mtt}, generates synthetic data by matching the training trajectories of models trained on the real dataset with that of the synthetic dataset. \citet{tesla} reduced the memory footprint of MTT and \citet{ftd} minimized the accumulated error in matching trajectories by distilling flatter trajectories. 

\textbf{Distribution Matching} Distribution matching generates synthetic data by directly matching the distribution of synthetic dataset and original dataset. One line of work does so by minimizing the maximum mean discrepancy (MMD) between the representations of the synthetic and real data using a large pool of feature extractors \citep{dm, sajedi2023datadam, wang2022cafe}. For these methods, distilling examples \textbf{per class} is essential, as without labels, the models trained on the synthetic data suffer from representation collapse and cannot learn any discriminative features \citep{dm}. \rev{More recently, \citep{yin2024squeeze, zhou2024selfsupervised, shao2024generalizedlargescaledatacondensation} apply ideas from \textit{data-free knowledge distillation}\cite{dfkd} to match the distributions of synthetic and real images using the batch norm statistics of models trained on the full data.} While these methods do not distill per class, the distillation loss relies on the labels of the data and is essential to distill data preserving class-discriminative features. 

\textbf{Dataset Distillation for SSL} Very recently, KRR-ST \citep{lee2023selfsupervised} applied the kernel-based meta-model matching to distillation for SSL. However, kernel ridge regression, with a relatively unchanging encoder as the kernel function, prevents distilling synthetic data that is useful for training the encoder effectively. We empirically confirm that the encoder learnt by pre-training on these generated examples cannot outperform encoder learnt directly using SSL pre-training on random real images. While KRR-ST also uses a MSE loss to representations instead of directly performing SSL, they claim to do so to mitigate the \textbf{bias} of bi-level optimization in meta-model based matching for SSL. In \method, we instead, use the knowledge distillation loss of MSE to representations of a larger teacher model to reduce the high \textbf{variance} of SSL gradients, and thus lower variance trajectories enable trajectory matching.
 
MTT is another dataset distillation method that is agnostic to the labels, and hence can be potentially applied to SSL. Nevertheless, application of MTT to SSL has not been explored before. In our work, we show that that na\"ive application of MTT to SSL yields poor performance. Then, we propose a method that leverages knowledge distillation to enable effective dataset distillation for SSL. 

\subsection{Data-efficient Learning via Subset Selection}

Another line of work that enables data-efficient learning is selecting subsets of training data that generalize on par with the full data. This has been extensively studied for supervised learning  \citep{coleman2020selection, dp_toneva_forgettability_2019, paul2021deep, mirzasoleiman2020coresets,yang2023towards}. At a high level, these works show that difficult-to-learn examples with a higher loss or gradient norm or uncertainty benefit SL the most. More recently, SAS \citep{pmlr-v202-joshi23b} has been proposed for selecting subsets of data for self-supervised contrastive learning (CL). Interestingly, the most beneficial subsets for SL are shown to be least beneficial for self-supervised CL. We use SAS as a baseline and show that the synthetic data distilled by our method can outperform training on these subsets. \looseness=-1

\subsection{Knowledge Distillation}

 Knowledge distillation (KD) is a technique used to transfer knowledge from a large teacher model to a smaller student model, with the aim of retaining high performance with reduced complexity \citep{hinton2015distilling}. For supervised learning, 
 some techniques align the student's outputs with those of the teacher \citep{hinton2015distilling}, while others concentrate on matching intermediate features \citep{romero2015fitnets},  attention maps \citep{zagoruyko2017paying} or pairwise distances between examples \citep{park2019relational}. Recent works have adapted KD for SSL models \citep{pkt, chen2017darkrank, koohpayegani2020compress, yu2019learning}.
 DarkRank \citep{chen2017darkrank} approaches KD for SSL as a rank matching problem between teacher and student encoders. PKT \citep{pkt} and Compress \citep{koohpayegani2020compress} model the similarities in data samples within the representation space as a probability distribution, aiming to align these distributions between the teacher and student encoders. \citet{yu2019learning} introduced the concept of minimizing the Mean Squared Error (MSE) between the representations of student and teacher encoders. In this work, we rely on KD to enable effective dataset distillation for SSL. \looseness=-1

\section{Problem Formulation} \label{sec:prob_form}


Consider a dataset $\D=\{\x_i\}_{i \in [n]}$ of $n$ unlabeled training examples drawn i.i.d. from an unknown distribution. Contrastive SSL methods \citep{zbontar2021barlow, chen2020simple}
learn an encoder $f$ that produces semantically meaningful representations by training on $\D$. BarlowTwins, in particular, learns these representations using the cross-correlation matrix of the outputs of different augmented views of 
a given batch of training data:
\begin{equation}\label{eq:bt_loss}
\mathcal{L}_{\text{BT}} := \sum_{i=1}^{d} (1 - F_{ii})^2 + \lambda \sum_{i=1}^{d} \sum_{j=1, j\neq i}^{d} F_{ij}^2,
\end{equation}
where $F$ is the cross-correlation of \textit{outputs within in a batch B} s.t. $F_{ij} = \mathbb{E}_{x \in \text{B}} \mathbb{E}_{x_1, x_2 \in \mathcal{A}(x)} [f_i(x_1) f_j(x_2)]$ with $\mathcal{A}(x)$ being the set of augmented views of $x$, 
$d$ is the dimension of the encoder $f$
and $\lambda$ is a hyperparameter. 
After pre-training, a linear classifier is trained on representations and labels of downstream task(s).


{Our goal is to generate a synthetic dataset $\Dsyn$ from $\D$, such that $|\Dsyn| \ll |\D|$ and the representations of $\D$ using the encoder $\fsyn$, trained on the the synthetic data $\Dsyn$, are similar to those obtained from encoder $\freal$, trained on the real data $\D$. Formally,
\begin{align}
    \Dsyn^* &:= \argmin_{\Dsyn} \E_{x \sim \D} D(\fsyn(x), \freal(x)),
\end{align}
where $D(\cdot, \cdot)$ is a distance function.}\looseness=-1

\textbf{Evaluation} To evaluate the encoder trained on the synthetic data, for every downstream task $\Dd$, 
we train a linear classifier $g_{\Dd}$ on the representations of $\fsyn$, and evaluate the generalization error of the linear classifier on $\Dd$:  
\begin{equation}\label{eq:downstream_error}
    \err[{\fsyn}(\Dd)] := \E_{(x,y)\sim \Dd} \mathbf{1}\big[ g_{\Dd}(\fsyn(x)) \neq y \big]
\end{equation} An effective encoder achieves small $\err[{\fsyn}(\Dd)]$ across all downstream tasks.

\section{Matching Training Trajectories for SSL} \label{sec:method}

As discussed in Sec. \ref{sec:rel_work}, distribution matching and gradient matching methods cannot work without labels, and meta-model matching cannot effectively update the encoder. Therefore, in our work, we focus on application of MTT to SSL distillation. First, we discuss the challenges of applying MTT in the SSL setting and show that its na\"ive application does not work. Then, we present our method, \method, that relies on recent results in knowledge distillation (KD) to enable trajectory matching for SSL. \looseness=-1

\subsection{Challenges of Matching SSL Training Trajectories}\label{sec:mtt_fail}
In this section, we first introduce trajectory matching (MTT) for supervised learning (SL). Then, we discuss why naively applying MTT to the SSL setting does not work.

\textbf{Matching Training Trajectories for SL} MTT \cite{mtt} generates a synthetic dataset by matching the trajectory of parameters $\hat{\theta}$ of a model trained on the synthetic data with trajectory of parameters $\theta^*$ of the model trained on real data (\textit{expert trajectory}). This matching is guided by the following loss function:
\begin{equation}\label{eq:mtt_loss}
    \ldd = \frac{\| \hat{\theta}_{t+N} - \theta^*_{t+M} \|^2}{\| \theta^*_{t} - \theta^*_{t+M} \|^2}
\end{equation} In \eqref{eq:mtt_loss}, $\theta^*_t$ denotes the model parameters after training on real data up to step $t$. The term $\hat{\theta}_{t+N}$ represents the model parameters after training on the synthetic dataset for $N$ steps, starting from $\theta^*_t$. Similarly, $\theta^*_{t+M}$ refers to the model parameters after $M$ steps of training on the real dataset. {The primary goal of MTT is to ensure that the encoder's weights after training on the synthetic dataset for $N$ steps closely match the encoder's weights after training on real data for a significantly larger number of steps $M$, usually with $N \ll M$.} MTT is agnostic to the training algorithm 
and doesn't rely on labels; thus, can be applied to dataset distillation for SSL. 
However, na\"ive application of MTT cannot effectively distill synthetic data for SSL pre-training, as we will discuss next. 

\begin{figure*}[t]
    \centering
    \begin{subfigure}[c]{0.3\linewidth}
        \includegraphics[width=\linewidth]{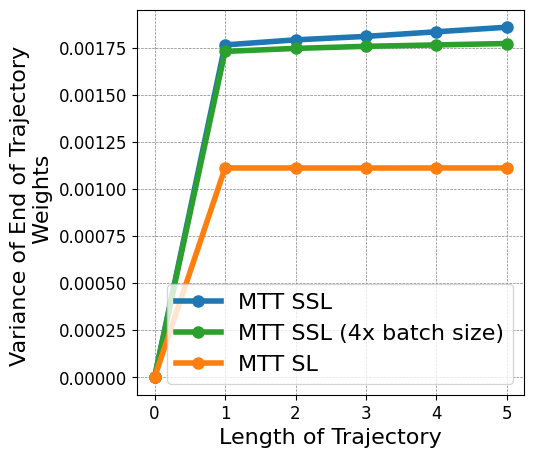}
        \captionsetup{justification=centering}
        \caption{Variance in Weights at End of Trajectory v/s Trajectory Length}
        \label{fig:high_var_traj}
    \end{subfigure}
    \hfill
    \begin{subfigure}[c]{0.28\linewidth}
        \includegraphics[width=\linewidth]{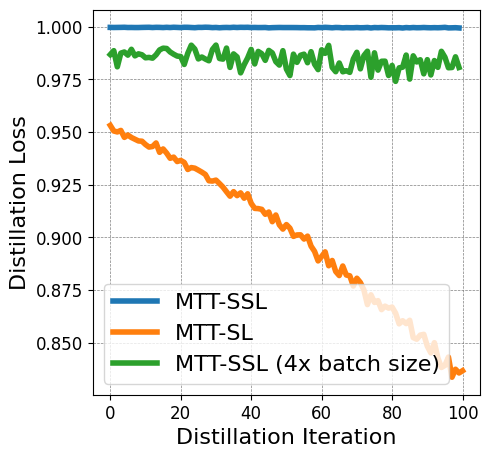}
        \captionsetup{justification=centering}
        \caption{Distillation Loss (Error in Matching Trajectories) v/s Distillation Iterations}
        \label{fig:simple_distill_loss}
    \end{subfigure}
    \hfill
    \begin{subfigure}[c]{0.28\linewidth}
        \includegraphics[width=\linewidth]{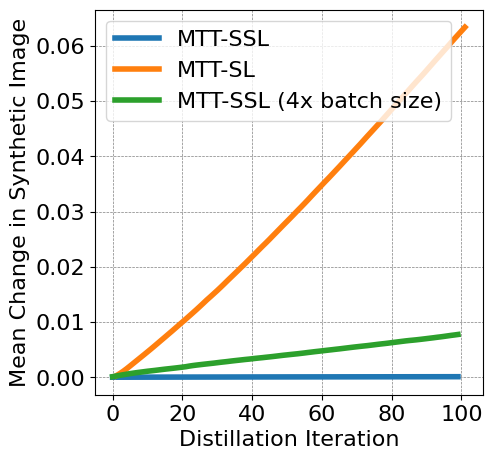}
        \captionsetup{justification=centering}
        \caption{Average Absolute Change in Pixel v/s \# Distillation Iterations}
        \label{fig:simple_distill_image_change}
    \end{subfigure}
    \caption{Challenges of MTT for SSL (Dataset: CIFAR100 (1\%); Arch: 3-layer ConvNet)}\label{fig:explain}
\end{figure*}

\textbf{High Variance Gradients Prevent Effective Trajectory Matching for SSL} SSL losses rely on interaction between all examples in a batch and consequently have high variance over choices of random batches (c.f., the Barlow-Twins loss in \eqref{eq:bt_loss}). As a result, the contribution of examples to the loss and hence their gradients varies significantly based on the rest of examples in the batch \cite{robinson2021contrastive}, unlike SL where each example's contribution to the loss is independent of other examples. The high variance in gradient over mini-batches in each iteration results in high variance of the trajectories of SSL. \looseness=-1 


\textbf{Theoretical Evidence for Higher Variance Gradients in SSL.} We now present, in Theorem \ref{thm:var_grad_toy}, theoretical evidence, in a simplified setting, demonstrating that the variance of the gradient of SSL over mini-batches is indeed greater than that of SL, i.e., $\mathrm{Var} \big( \nabla_{W} L_{SSL}(B) \big) > \mathrm{Var} \big( \nabla_{W} L_{SL}(B) \big)$. Proof appears in Appendix \ref{app:theory}. Appendix \ref{app:ext_theory} presents a more general version of this analysis, when optimizing with synchronous parallel SGD.

\begin{theorem}\label{thm:var_grad_toy}
Let $D = \{(x_i, y_i)\}_{i=1}^n$ be a dataset with $n$ examples, where $x_i$ is the $i$-th input and $y_i \in \{0, 1\}$ is its corresponding class label. Assume the data $x_i$ are generated using the sparse coding model \cite{xue2023features, pmlr-v238-joshi24a}: for class 0, $x_i = e_0 + \epsilon_i$, and for class 1, $x_i = e_1 + \epsilon_i$, where $e_0$ and $e_1$ are basis vectors and $\epsilon_i \sim \mathcal{N}(0, \sigma_{N} I)$ is noise. Note that using mean class vectors $e_0, e_1$ w.l.o.g. models the setting of arbitrary mean class vectors that are orthogonal to each other. Each class has $\frac{n}{2}$ examples. \looseness=-1

Consider a linear model $f_\theta(x) = Wx$, with $W$ initialized as I (the identity matrix). The supervised mean squared error (MSE) loss is given by:
\[
L_{SL}(B) = \frac{1}{|B|} \sum_{i \in B} \| f_\theta(x_i) - e_{y_i} \|^2,
\]
where $e_{y_i}$ is the one-hot encoded vector for class $y_i$, and $B$ is a mini-batch.

The SSL Loss (spectral contrastive loss used here for simplicity of analysis) is given by:
\[
L_{SSL} = -2 \mathbb{E}_{\substack{x_1, x_2 \sim \mathcal{A}(x_i) \\ x_i \in B}} \left[ f_\theta(x_1)^T f_\theta(x_2) \right] + \mathbb{E}_{x_i, x_j \in B} \left[ f_\theta(x_i)^T f_\theta(x_j) \right]^2,
\]
where $\mathcal{A}(x_i) = x_i + \epsilon_{\text{aug}}$, with $\epsilon_{\text{aug}} \sim \mathcal{N}(0, I)$ representing augmentation noise.

Under stochastic gradient descent (SGD) with a mini-batch size $B$ of 2:
\[
\mathrm{Var}(\nabla_W L_{SL}(B)) < \mathrm{Var}(\nabla_W L_{SSL}(B)).
\]
where for a matrix M, $\mathrm{Var}(M) := \E[\norm{M - \E[M]}^2]$ as in \cite{gower2020variancereducedmethodsmachinelearning}.
\end{theorem}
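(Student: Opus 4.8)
The plan is to compute both variances explicitly using the sparse coding model and compare. First I would handle $L_{SL}$: with batch size $2$, a mini-batch is either a same-class pair or a mixed pair, and in every case the per-example gradient of $\|Wx_i - e_{y_i}\|^2$ at $W = I$ is $2(x_i - e_{y_i})x_i^T = 2(\epsilon_i)(e_{y_i} + \epsilon_i)^T$. So the batch gradient is an average of two such terms; since the $\epsilon_i$ are i.i.d. Gaussian with small covariance $\sigma_N I$, the mean is $\Theta(\sigma_N)$ (from the $\epsilon_i \epsilon_i^T$ term) and the fluctuations are controlled by moments of Gaussians up to fourth order. I would get $\mathrm{Var}(\nabla_W L_{SL}(B)) = c_1 \sigma_N^2 + c_2 \sigma_N^3 + c_3 \sigma_N^4$ for explicit constants depending on the dimension $d$ — the key qualitative point being that this vanishes as $\sigma_N \to 0$, because without label noise the supervised gradient is deterministic.

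Next I would handle $L_{SSL}$. The crucial structural difference is the augmentation noise $\epsilon_{\text{aug}} \sim \mathcal{N}(0, I)$, which has variance $1$, not $\sigma_N$. Writing the spectral contrastive loss at $W = I$, the gradient involves terms like $\nabla_W \mathbb{E}[f(x_1)^T f(x_2)] = \mathbb{E}[x_1 x_2^T + x_2 x_1^T]$ over augmentations and the batch, plus the quartic term from $\mathbb{E}[f(x_i)^T f(x_j)]^2$. I would expand $x_1 = x_i + \epsilon_{\text{aug},1}$, $x_2 = x_i + \epsilon_{\text{aug},2}$ and note that even after taking the expectation over augmentations inside the loss (as the loss is defined with $\mathbb{E}_{x_1,x_2}$ inside), the residual randomness over the choice of the size-$2$ mini-batch $\{x_i, x_j\}$ already contributes $\Theta(1)$-scale variance because the cross term $f(x_i)^T f(x_j) \approx e_{y_i}^T e_{y_j} + (\text{noise})$ flips between $0$ and $1$ depending on whether the two sampled points share a class, and this is then squared and differentiated. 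I would show $\mathrm{Var}(\nabla_W L_{SSL}(B)) \geq c_0 > 0$ with $c_0$ independent of $\sigma_N$ (indeed bounded below by a constant coming from the augmentation noise and the class-composition randomness of the batch).

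The conclusion then follows: for $\sigma_N$ small enough (which is the operative regime of the sparse coding model, where $e_0, e_1$ dominate the signal), $c_1 \sigma_N^2 + c_2\sigma_N^3 + c_3 \sigma_N^4 < c_0$, giving $\mathrm{Var}(\nabla_W L_{SL}(B)) < \mathrm{Var}(\nabla_W L_{SSL}(B))$. I would organize the write-up as: (i) a lemma computing $\mathbb{E}[\nabla_W L_{SL}(B)]$ and $\mathbb{E}[\|\nabla_W L_{SL}(B)\|^2]$ via Gaussian moment identities (using $\mathbb{E}[\epsilon\epsilon^T] = \sigma_N I$, $\mathbb{E}[\|\epsilon\|^4] = \sigma_N^2 d(d+2)$, etc.); (ii) an analogous but longer lemma for the SSL side, carefully tracking the two independent sources of randomness (batch sampling over the $n$ points, and augmentation noise); (iii) a final comparison step.

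The main obstacle I expect is the SSL computation, specifically the quartic term $\mathbb{E}_{x_i,x_j \in B}[f_\theta(x_i)^T f_\theta(x_j)]^2$: its gradient is a product of the inner product and an outer-product-type matrix, so computing its second moment requires sixth-order Gaussian moments and careful bookkeeping of which indices coincide when the batch is $\{x_i, x_j\}$ with $i \ne j$ versus the degenerate cases. I would manage this by isolating the $\sigma_N$-independent part — the terms where all the $\epsilon$'s are replaced by zero, leaving only $e_0, e_1$ — and showing that part alone already produces strictly positive variance due to the random class composition of the batch; the remaining $\sigma_N$-dependent corrections only need to be bounded, not computed exactly, which keeps the argument tractable.
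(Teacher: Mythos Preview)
Your approach is correct in spirit but differs substantially from the paper's, and there is one limitation worth flagging.

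\textbf{Comparison with the paper.} The paper also conditions on the three possible class compositions of the size-$2$ batch, but instead of an asymptotic small-$\sigma_N$ argument it expands $\nabla_W L_{SL}(B) - \E[\nabla_W L_{SL}(B)]$ and $\nabla_W L_{SSL}(B) - \E[\nabla_W L_{SSL}(B)]$ term by term in each case and argues element-wise that the SSL deviation has strictly more (and larger-coefficient) noise monomials in $\epsilon_{N_1},\epsilon_{N_2}$. For the SSL side it does not differentiate the spectral contrastive loss directly; it invokes the rewriting from \cite{xue2023features}, $L_{SSL} = -\operatorname{Tr}(2\tilde M W W^\top) + \operatorname{Tr}(M W^\top W M W^\top W)$, takes the number of augmentations $m\to\infty$ so that $M=\tilde M=\tfrac12(x_1x_1^\top+x_2x_2^\top)$, and then at $W=I$ obtains $\nabla_W L_{SSL}(B) = -4M+4M^2$. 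This collapses your ``main obstacle'' (sixth-order Gaussian moments from the quartic term) to computing $M$ and $M^2$ in each case, which is much shorter than the direct expansion you are anticipating. Your route buys a cleaner conceptual punchline---$\mathrm{Var}(\nabla_W L_{SL}(B))=O(\sigma_N^2)$ while $\mathrm{Var}(\nabla_W L_{SSL}(B))\ge c_0>0$ from batch class-composition alone---whereas the paper's route avoids any small-$\sigma_N$ restriction but at the cost of an informal ``more noise terms'' comparison.

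\textbf{Two cautions.} First, your asymptotic argument proves the inequality only for sufficiently small $\sigma_N$; the theorem as stated does not explicitly impose that restriction, so you should either say so or check that the constants work out for general $\sigma_N$. Second, the augmentation noise $\epsilon_{\text{aug}}$ is integrated out inside the loss (and the paper explicitly sends $m\to\infty$), so it is not itself a source of gradient variance over mini-batches; you correctly fall back on batch composition as the driver of the $\Theta(1)$ lower bound, but the write-up should not frame $\epsilon_{\text{aug}}$ as the ``crucial structural difference.'' The actual mechanism, which both your proposal and the paper's proof exploit, is that at $\sigma_N=0$ the SSL gradient $-4M+4M^2$ equals $0$ for a same-class batch and $-(e_0e_0^\top+e_1e_1^\top)$ for a mixed batch, giving strictly positive variance, while the SL gradient is identically zero.
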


\textbf{Empirical Evidence for Challenges of Matching SSL Trajectories} Due to the high variance in the gradient of SSL objectives, the nai\"ve application of MTT to SSL does not succeed. Firstly, the slower convergence caused by high variance gradients necessitates much longer trajectories for both training on real and synthetic data. 
{Secondly, the higher variance of gradients results in greater variance in the weights at the end of trajectories starting from the same initialization (henceforth referred to as \textit{variance of trajectories}), \rev{as illustrated theoretically in the simplified setting above}. Attempting to match SSL's longer, higher variance trajectories is extremely challenging, as matching such trajectories results in chaotic updates to the synthetic images. Thus, the synthetic images cannot move away from their initialization meaningfully.} Fig. \ref{fig:high_var_traj} 
shows empirically
that the variance of SSL trajectories is larger than that of SL trajectories, across different trajectory lengths. Additionally, the variance of trajectories grows faster, with length of trajectory, for SSL than for SL, exacerbating the problem for longer trajectory matching. Fig. \ref{fig:simple_distill_loss} compares a simplified distillation using MTT with a single expert trajectory for SSL and SL.  Despite extensive hyper-parameter tuning, matching even a single expert trajectory is challenging for SSL, confirmed by the slow decrease of distillation loss. This indicates that the training trajectory on the distilled set is unable to match the training trajectory on the real data for SSL.
Fig. \ref{fig:simple_distill_image_change} shows that the difficulty in aligning trajectories is due to the chaotic updates of the synthetic image, as evidenced by the synthetic images being unable to move away from their initialization. To further confirm that the inability to distill effectively is indeed due to the variance of trajectories, we also include a comparison to MTT SSL with 4x larger batch size, which leads to slightly lower variance. Fig. \ref{fig:high_var_traj} confirms that indeed the larger batch size reduces the variance of the trajectories slightly.
However, Fig. \ref{fig:simple_distill_loss} and \ref{fig:simple_distill_image_change} show that reducing the variance of SSL trajectories via larger batch size is insufficient to help distillation since an infeasibly large batch size will likely be required to achieve the necessary low variance trajectories.


Next, we will present our method, \method, designed to address the above challenges.

\subsection{Matching Knowledge Distillation Trajectories}

To reduce the length and variance of SSL trajectories, our key idea
is to leverage the recent results in knowledge distillation (KD) \cite{kim2021comparing}. 
We first introduce KD, and then discuss our method, 
\method, {M}atching {K}nowledge {D}istillation {T}rajectories, that leverages KD to reduce the length and variance of SSL trajectories. \looseness=-1

\textbf{Knowledge Distillation (KD)} KD refers to \textit{distilling} the \textit{knowledge} of a larger model (teacher) into a smaller model (student) to achieve similar generalization as the larger model, but with reduced complexity and faster inference times. Here, we rely on the knowledge distillation objective for SSL models, introduced in \cite{yu2019learning}:
\begin{equation}\label{eq:kd}
     \mathcal{L_{KD}} = \E_{x \sim \D} \big[\text{MSE}(f_{S}(x), f_{T}(x)) \big], 
\end{equation} where $f_S$ and $f_T$ represent the student and teacher encoders respectively. \citep{yu2019learning} trains student models with the aforementioned KD objective and the original SSL Loss. However, we only minimize the MSE between student and teacher representations to avoid the issues with matching SSL training trajectories (discussed in Sec. \ref{sec:mtt_fail}). 

\textbf{Converting SSL to SL trajectories via KD} We use the objective presented in \eqref{eq:kd} i.e. minimizing the MSE between the \textit{representations} of a student and a teacher model trained with SSL. 
In doing so, 
we train the student model to match the performance of the teacher trained with SSL. 
Note that training the student model by minimizing the MSE loss in \eqref{eq:kd} is a \textit{supervised} objective. Therefore, while the trained student model will produce similar representations to that of the teacher, training with MSE loss is much faster than SSL, as its gradients have a much smaller variance (\textit{c.f.} Fig. \ref{fig:high_var_traj}). 
Thus, we can get shorter and lower variance \textit{expert} trajectories from the \textit{student} models trained with KD using the MSE loss, instead of the teacher model trained with SSL.
Then, we can generate synthetic examples by matching these shorter and lower variance trajectories, without relying on labels. \looseness=-1 

\begin{algorithm}[t] 
    \small
    \caption{\method: Matching Knowledge Distillation Trajectories}
    \label{alg:mkdt}
    \begin{algorithmic}[1]
        \REQUIRE {$K$: Number of expert trajectories }
        \REQUIRE {$S$: Number of  distillation steps}
        \REQUIRE {$M$: \# of updates between starting and target expert params.}
        \REQUIRE {$N$: \# of updates to student network per distillation step.}
        \REQUIRE {$T^+ < T$: Maximum start epoch.}
        \STATE Train model $\fref$ using $\Lssl$ on $\D$ using augmentations $\mathcal{A}$
        \STATE $\{ \tau_i^* \}$ $\leftarrow$ Train $K$ expert trajectories to minimize $\mathcal{L_{KD}}(f_{s_i}, \fref)$
        \STATE {Initialize distilled data $\Dsyn, \Zsyn \sim \D, \Zreal$}
        \STATE {Initialize trainable learning rate $\asyn \coloneqq \alpha_0$ for $\mathcal{D}_\mathsf{syn}$}
        \FOR {$S$ steps}
         \label{step:outer}
            \STATE {$\triangleright$ Sample expert trajectory: $\tau^* \sim \{\tau^*_i\}$ with $ \tau^* = \{\theta^*_{t}\}_0^T$}
            \STATE {$\triangleright$ Choose random start epoch, $t \leq T^+$}
            \label{step:time}
            \STATE {$\triangleright$ Initialize student network with expert params:}
            \STATE {$\quad\quad \hat{\theta}_t \coloneqq \theta^*_{t}$}
            \FOR {$n = 0 \to N-1$}
                \STATE $\triangleright$ Sample a mini-batch of distilled images:
                \STATE $\quad\quad b_{t+n} \sim \mathcal{D}_\mathsf{syn}$
                \label{step:inner}
                \STATE $\triangleright$ Update student network w.r.t. MSE loss to reference representations:
                \STATE $\hat{\theta}_{t+n+1} = \hat{\theta}_{t+n} - \asyn \nabla {\Lmse}_{\hat{\theta}_{t+n}} (b_{t+n}, \Zsyn)$
            \ENDFOR
            \STATE {$\triangleright$ Compute loss between ending student and expert params:}
            \STATE $\quad\quad\ldd = \|\hat{\theta}_{t+N} - \theta^*_{t+M}\|_2^2 \;/\; \|\theta^*_{t} - \theta^*_{t+M}\|_2^2$
            \STATE {$\triangleright$ Update $\mathcal{D}_\mathsf{syn}$ and $\asyn$ with respect to $\ldd$}
        \ENDFOR
        \ENSURE distilled data $\mathcal{D}_\mathsf{syn}, \Zsyn$ and learning rate $\asyn$
    \end{algorithmic}
\end{algorithm}

\textbf{Matching KD Trajectories (\method)}
We now describe our method \method: {M}atching {K}nowledge {D}istillation {T}rajectories. \method~has the following three steps: (1) training a teacher model with SSL, (2) getting expert trajectories by training student models to match representations of the teacher using KD objective, (3) generating synthetic examples by matching the expert trajectories. Below, we discuss each of these steps in more details.

\textbf{(1) Training the Teacher Model with SSL} First, we train the teacher encoder $\fref$ with $\Lssl$ on $\D$: \looseness=-1
\begin{equation}
\theta_T = \underset{\theta}{\argmin} \big[ \Lssl(f_{\theta}, \D) \big],
\end{equation} In our experiments, $\Lssl$ is the BarlowTwins loss function shown in \eqref{eq:bt_loss}, but our method is agnostic to the choice of SSL algorithm. Since KD with larger models leads to better {downstream} performance \citep{huang2022knowledge}, we use a teacher model that is much larger than the student encoder used for creating the expert trajectories for distillation. For example, in our experiments we use a ResNet-18 as the teacher encoder and a 3 or 4-layer ConvNet as the student encoder. \looseness=-1

\textbf{(2) Getting Expert Trajectories with KD} For training the expert trajectories, 
we encode the full real data with the teacher model and train the student model to minimize the MSE between its representations and that of the teacher model. We refer to the representations of the real data from the teacher model as \textit{teacher representations} denoted by
$\Zt = \big[ \cdots \fref(x_i)  \cdots \big], \forall x_i \in \D$. Formally,
\begin{equation}
\underset{\theta^*}{\min}  \E_{x_i \in \D} \Lmse(\fsk(x_i), [\Zt]_i).
\end{equation}
We train $K$  such student encoders and save the weights after each epoch of training to generate the \textit{expert} trajectories that we will match in the distillation phase. 

\textbf{(3) Data Distillation by Matching KD Trajectories} We now optimize the synthetic data such that training on it results in trajectories that match the aforementioned expert trajectories. 
First, we initialize $\Dsyn$ as a subset of $\D$ and  $\Zsyn$ as the corresponding teacher representations from $\Zt$. 
Then, in every distillation iteration, we sample an expert trajectory starting at epoch $t$, where $t \leq T^+$, of length $M$. {We then train on the synthetic data for $N$ steps by minimizing the MSE between representations of synthetic data from $\fsyn$ and $\Zsyn$ $\fref$. Formally, $\forall n \in [N]$,}
\begin{equation}
    \hat{\theta}_{t+n+1} = \hat{\theta}_{t+n} - \asyn \nabla \Lmse(f_{\hat{\theta}_{t+n}} (\Dsyn), \Zsyn)
\end{equation}
Now, we compute our distillation loss $\ldd$ (shown in \eqref{eq:mtt_loss}) using the parameters of the encoder trained on the synthetic data and the encoder trained on the full data, and update the synthetic data and learning rate, $\Dsyn$ and $\asyn$, respectively, to minimize this. Note that $\Zsyn$ remains unchanged. We repeat this distillation for $S$ iterations. Pseudo-code of \method~is provided in Alg. \ref{alg:mkdt}. 

\textbf{Initializing Synthetic Data} Empirically, we find that initializing $\Dsyn$ from the subset of examples from $\D$ that have \textit{high loss} across the expert trajectories, leads to better downstream performance than initializing with random subsets. In particular, for all expert trajectories, we use the encoders after 1 epoch of training and use it to compute the MSE loss for all examples $x_i \in \D$ i.e. $\Lmse(f_{\theta^*_1}(x_i), {[\Zt]}_i)$. We then average the loss for examples across encoders from all expert trajectories and choose the examples with highest loss to initialize our synthetic data. Sec. \ref{sec:experiments} compares initializing \method\ with random subsets and such \textit{high loss} subsets. \looseness=-1


\textbf{Evaluating the Distilled Dataset} For evaluation, we first pre-train the encoder $\fsyn$ on the distilled data by minimizing the MSE between the representations of the synthetic data $\Dsyn$ and $\Zsyn$ using the distilled learning rate $\asyn$.
\begin{equation}
\tsyn = \underset{\theta}{\argmin}  \E_{x_i \in \Dsyn} \Lmse( f_{\theta} (x), {[\Zsyn]}_i) \text{ with l.r. } \asyn
\end{equation}
We then evaluate the encoder $\fsyn$  using $\err_{\fsyn}(\Dd)$, defined in \eqref{eq:downstream_error}, i.e. the generalization error of linear classifier $g_{\Dd}$ trained on the representations obtained from encoder $\fsyn$ and corresponding labels of downstream task $\Dd$.\looseness=-1

\section{Experiments}\label{sec:experiments}

\begin{table*}[t]
\centering
\caption{Pre-training on CIFAR10 (2\% of Full Data)}
\label{tab:cifar10}
\resizebox{\textwidth}{!}{%
\begin{tabular}{llccccccc}
\toprule
\multirow{2}{3cm}{Size of Downstream \ Labeled Data (\%)} & \multirow{2}{*}{Method} & \multicolumn{1}{c}{Pre-Training} & \multicolumn{5}{c}{Downstream} \\
\cmidrule(lr){3-3} \cmidrule(lr){4-9}
& & CIFAR10 & Tiny ImageNet & CIFAR100 & Aircraft & CUB2011 & Dogs & Flowers \\
\midrule
\multirow{5}{*}{1\%}
& No Pre-Training & $35.84\scriptstyle{ \pm 1.39}$ & $2.52\scriptstyle{ \pm 0.09}$ & $8.01\scriptstyle{ \pm 0.19}$ & $2.43\scriptstyle{ \pm 0.17}$ & $1.27\scriptstyle{ \pm 0.10}$ & $1.92\scriptstyle{ \pm 0.18}$ & $2.02\scriptstyle{ \pm 0.76}$ \\
& Random Subset  & $36.35\scriptstyle{ \pm 0.67}$ & $2.41\scriptstyle{ \pm 0.08}$ & $7.42\scriptstyle{ \pm 0.31}$ & $2.41\scriptstyle{ \pm 0.30}$ & $1.16\scriptstyle{ \pm 0.08}$ & $1.90\scriptstyle{ \pm 0.22}$ & $1.99\scriptstyle{ \pm 0.19}$  \\
& SAS Subset & $36.02\scriptstyle{ \pm 1.52}$ & $2.69\scriptstyle{ \pm 0.31}$ & $7.77\scriptstyle{ \pm 0.35}$ & $2.29\scriptstyle{ \pm 0.26}$ & $1.14\scriptstyle{ \pm 0.01}$ & $1.78\scriptstyle{ \pm 0.32}$ & $2.22\scriptstyle{ \pm 0.23}$ \\
& KRR-ST  & $37.19\scriptstyle{ \pm 0.49}$ & $2.84\scriptstyle{ \pm 0.17}$ & $8.67\scriptstyle{ \pm 0.41}$ & $2.53\scriptstyle{ \pm 0.05}$ & $1.25\scriptstyle{ \pm 0.02}$ & $1.88\scriptstyle{ \pm 0.32}$ & $2.42\scriptstyle{ \pm 0.22}$\\
& \textbf{\method} & $\textbf{ 44.36}\scriptstyle{ \pm 1.61}$ & $\textbf{3.58}\scriptstyle{ \pm 0.09}$ & $\textbf{10.58}\scriptstyle{ \pm 0.24}$ & $\textbf{2.58}\scriptstyle{ \pm 0.18}$ & $\textbf{1.37}\scriptstyle{ \pm 0.09}$ & $\textbf{2.11}\scriptstyle{ \pm 0.28}$ & $\textbf{2.52}\scriptstyle{ \pm 0.07}$\\
\cmidrule(lr){2-9}
& Full Data & $58.21\scriptstyle{ \pm 0.28}$ & $4.94\scriptstyle{ \pm 0.38}$ & $14.53\scriptstyle{ \pm 0.40}$ & $2.92\scriptstyle{ \pm 0.25}$ & $1.69\scriptstyle{ \pm 0.13}$ & $2.40\scriptstyle{ \pm 0.25}$ & $3.23\scriptstyle{ \pm 0.69}$ \\
\midrule
\multirow{5}{*}{5\%} 
& No Pre-Training & $46.23\scriptstyle{ \pm 0.07}$ &	$5.37\scriptstyle{ \pm 0.39}$	& $16.12\scriptstyle{ \pm 0.13}$ &	$5.61\scriptstyle{ \pm 0.68}$ &	$1.97\scriptstyle{ \pm 0.13}$ & $2.90\scriptstyle{ \pm 0.18}$ & $\textbf{5.22}\scriptstyle{ \pm 0.52}$\\
& Random Subset  & $46.62\scriptstyle{ \pm 1.02}$	& $5.49\scriptstyle{ \pm 0.12}$	& $15.28\scriptstyle{ \pm 0.66}$	& 
$5.35\scriptstyle{ \pm 0.96}$ & $1.51\scriptstyle{ \pm 0.08}$	& $2.52\scriptstyle{ \pm 0.22}$ & $3.64\scriptstyle{ \pm 0.46}$  \\
& SAS Subset & $46.52\scriptstyle{ \pm 0.61}$ & $5.41\scriptstyle{ \pm 0.42}$ & $15.90\scriptstyle{ \pm 0.28}$ & $5.63\scriptstyle{ \pm 0.76}$ & $1.48\scriptstyle{ \pm 0.16}$ & $2.69\scriptstyle{ \pm 0.21}$ & $3.75\scriptstyle{ \pm 0.10}$\\
& KRR-ST & $46.75\scriptstyle{ \pm 0.45}$ &	$6.85\scriptstyle{ \pm 0.20}$ & $16.65\scriptstyle{ \pm 0.31}$ & $5.41\scriptstyle{ \pm 0.45}$ & $1.88\scriptstyle{ \pm 0.10}$ & $2.76\scriptstyle{ \pm 0.40}$ & $4.52\scriptstyle{ \pm 0.14}$ \\
& \textbf{\method} &  $\textbf{53.08}\scriptstyle{ \pm 0.13}$ & \textbf{$\textbf{7.25}\scriptstyle{ \pm 0.17}$} & \textbf{$\textbf{19.57}\scriptstyle{ \pm 0.29}$} & \textbf{$\textbf{5.97}\scriptstyle{ \pm 0.79}$} & \textbf{$\textbf{2.06}\scriptstyle{ \pm 0.10}$} & \textbf{$\textbf{3.06}\scriptstyle{ \pm 0.46}$} & $4.97\scriptstyle{ \pm 0.54}$\\
\cmidrule(lr){2-9}
& Full Data & $67.16\scriptstyle{ \pm 0.43}$ & $10.85\scriptstyle{ \pm 0.16}$ & $26.38\scriptstyle{ \pm 0.52}$ & $6.92\scriptstyle{ \pm 0.61}$ & $2.51\scriptstyle{ \pm 0.08}$ & $3.88\scriptstyle{ \pm 0.25}$ & $6.37\scriptstyle{ \pm 0.67}$ \\
\bottomrule
\end{tabular}
}
\end{table*}

\begin{table*}[h]
\centering
\caption{Pre-training on CIFAR100 (2\% of Full Data)}
\label{tab:cifar100}
\resizebox{\textwidth}{!}{%
\begin{tabular}{llccccccc}
\toprule
\multirow{2}{3cm}{Size of Downstream \ Labeled Data (\%)} & \multirow{2}{*}{Method} & \multicolumn{1}{c}{Pre-Training} & \multicolumn{5}{c}{Downstream} \\
\cmidrule(lr){3-3} \cmidrule(lr){4-9}
& & CIFAR100 & Tiny ImageNet & CIFAR10 & Aircraft & CUB2011 & Dogs & Flowers \\
\midrule
\multirow{7}{*}{1\%}
& No Pre-Training & $8.01\scriptstyle{ \pm 0.19}$ & $2.52\scriptstyle{ \pm 0.09}$ & $35.84\scriptstyle{ \pm 1.39}$ & $2.43\scriptstyle{ \pm 0.17}$ & $1.27\scriptstyle{ \pm 0.10}$ & $1.92\scriptstyle{ \pm 0.18}$ & $2.02\scriptstyle{ \pm 0.76}$ \\
& Random Subset & $9.20\scriptstyle{ \pm 0.15}$ & $3.16\scriptstyle{ \pm 0.21}$ & $38.03\scriptstyle{ \pm 1.22}$ & $2.41\scriptstyle{ \pm 0.15}$ & $1.43\scriptstyle{ \pm 0.12}$ & $1.99\scriptstyle{ \pm 0.10}$ & $2.81\scriptstyle{ \pm 0.43}$ \\
& SAS Subset & $9.39\scriptstyle{ \pm 0.18}$ & $3.23\scriptstyle{ \pm 0.19}$ & $38.73\scriptstyle{ \pm 1.48}$ & $2.53\scriptstyle{ \pm 0.04}$ & $1.42\scriptstyle{ \pm 0.04}$ & $2.07\scriptstyle{ \pm 0.14}$ & $2.95\scriptstyle{ \pm 0.37}$  \\
& High Loss Subset & $10.03\scriptstyle{ \pm 0.12}$ & $3.33\scriptstyle{ \pm 0.10}$ & $39.78\scriptstyle{ \pm 1.61}$ & $2.47\scriptstyle{ \pm 0.29}$ & $1.56\scriptstyle{ \pm 0.14}$ & $2.13\scriptstyle{ \pm 0.24}$ & $2.63\scriptstyle{ \pm 0.51}$ \\
& KRR-ST & $8.31\scriptstyle{ \pm 0.30}$ & $2.73\scriptstyle{ \pm 0.08}$ & $37.19\scriptstyle{ \pm 0.83}$ & $2.56\scriptstyle{ \pm 0.20}$ & $1.29\scriptstyle{ \pm 0.04}$ & $1.92\scriptstyle{ \pm 0.11}$ & $2.58\scriptstyle{ \pm 0.14}$ \\
& \textbf{\method\ (Rnd Sub)} & $11.44\scriptstyle{ \pm 0.36}$ & $3.90\scriptstyle{ \pm 0.20}$ & $43.35\scriptstyle{ \pm 1.08}$ & $2.53\scriptstyle{ \pm 0.22}$ & $\textbf{1.66}\scriptstyle{ \pm 0.13}$ & $\textbf{2.22}\scriptstyle{ \pm 0.20}$ & $2.63\scriptstyle{ \pm 1.02}$ \\
& \textbf{\method} & $\textbf{12.36}\scriptstyle{ \pm 0.67}$ & $\textbf{4.13}\scriptstyle{ \pm 0.29}$ & $\textbf{44.90}\scriptstyle{ \pm 1.18}$ & $\textbf{2.74}\scriptstyle{ \pm 0.30}$ & $1.61\scriptstyle{ \pm 0.14}$ & $2.15\scriptstyle{ \pm 0.39}$ & $\textbf{3.24}\scriptstyle{ \pm 0.44}$ \\
\cmidrule(lr){2-9}
& Full Data & $21.44\scriptstyle{ \pm 0.86}$ & $6.80\scriptstyle{ \pm 0.37}$ & $58.21\scriptstyle{ \pm 0.81}$ & $3.20\scriptstyle{ \pm 0.22}$ & $1.79\scriptstyle{ \pm 0.08}$ & $2.50\scriptstyle{ \pm 0.27}$ & $3.09\scriptstyle{ \pm 1.14}$ \\
\midrule
\multirow{7}{*}{5\%} 
& No Pre-Training & $16.12\scriptstyle{ \pm 0.13}$ & $5.37\scriptstyle{ \pm 0.39}$ & $46.23\scriptstyle{ \pm 0.07}$ & $5.61\scriptstyle{ \pm 0.68}$ & $1.97\scriptstyle{ \pm 0.13}$ & $2.90\scriptstyle{ \pm 0.18}$ & $5.22\scriptstyle{ \pm 0.52}$ \\
& Random Subset & $17.75\scriptstyle{ \pm 0.42}$ & $6.79\scriptstyle{ \pm 0.06}$ & $48.59\scriptstyle{ \pm 0.26}$ & $5.66\scriptstyle{ \pm 0.71}$ & $2.12\scriptstyle{ \pm 0.23}$ & $3.02\scriptstyle{ \pm 0.31}$ & $5.44\scriptstyle{ \pm 0.22}$ \\
& SAS Subset & $17.94\scriptstyle{ \pm 0.54}$ & $6.71\scriptstyle{ \pm 0.52}$ & $48.69\scriptstyle{ \pm 0.26}$ & $5.95\scriptstyle{ \pm 0.88}$ & $2.15\scriptstyle{ \pm 0.27}$ & $3.22\scriptstyle{ \pm 0.47}$ & $5.56\scriptstyle{ \pm 0.43}$\\
& High Loss Subset & $18.72\scriptstyle{ \pm 0.21}$ & $6.94\scriptstyle{ \pm 0.34}$ & $49.59\scriptstyle{ \pm 0.34}$ & $5.63\scriptstyle{ \pm 0.52}$ & $2.58\scriptstyle{ \pm 0.13}$ & $3.18\scriptstyle{ \pm 0.20}$ & $6.14\scriptstyle{ \pm 0.54}$ \\
& KRR-ST & $16.40\scriptstyle{ \pm 0.63}$ & $6.16\scriptstyle{ \pm 0.45}$ & $47.96\scriptstyle{ \pm 0.32}$ & $5.54\scriptstyle{ \pm 0.97}$ & $2.00\scriptstyle{ \pm 0.08}$ & $2.95\scriptstyle{ \pm 0.25}$ & $4.69\scriptstyle{ \pm 0.15}$ \\
& \textbf{\method\ (Rnd Sub)} & $21.71\scriptstyle{ \pm 0.28}$ & $8.01\scriptstyle{ \pm 0.08}$ & $53.08\scriptstyle{ \pm 0.19}$ & $6.24\scriptstyle{ \pm 0.79}$ & $\textbf{2.53}\scriptstyle{ \pm 0.03}$ & $\textbf{3.38}\scriptstyle{ \pm 0.23}$ & $6.26\scriptstyle{ \pm 0.22}$ \\
& \textbf{\method} & $\textbf{22.64}\scriptstyle{ \pm 0.42}$	& $\textbf{8.07}\scriptstyle{ \pm 0.16}$ & $\textbf{54.12}\scriptstyle{ \pm 0.29}$ & $\textbf{6.68}\scriptstyle{ \pm 0.83}$ & $2.50\scriptstyle{ \pm 0.16}$ & $3.25\scriptstyle{ \pm 0.36}$ & $\textbf{6.37}\scriptstyle{ \pm 0.46}$ \\
\cmidrule(lr){2-9}
& Full Data & $35.78\scriptstyle{ \pm 0.54}$ & $14.11\scriptstyle{ \pm 0.55}$ & $67.25\scriptstyle{ \pm 0.49}$ & $7.46\scriptstyle{ \pm 0.53}$ & $3.00\scriptstyle{ \pm 0.05}$ & $4.23\scriptstyle{ \pm 0.21}$ & $8.41\scriptstyle{ \pm 0.60}$ \\
\bottomrule
\end{tabular}
}

\end{table*}
\begin{table*}[h]
\centering
\caption{Pre-training on TinyImageNet (2\% of Full Data)}
\label{tab:tiny}

\resizebox{\textwidth}{!}{%
\begin{tabular}{llccccccc}
\toprule
\multirow{2}{3cm}{Size of Downstream \ Labeled Data (\%)} & \multirow{2}{*}{Method} & \multicolumn{1}{c}{Pre-Training} & \multicolumn{5}{c}{Downstream} \\
\cmidrule(lr){3-3} \cmidrule(lr){4-9}
& & Tiny ImageNet & CIFAR10 & CIFAR100 & Aircraft & CUB2011 & Dogs & Flowers \\
\midrule
\multirow{5}{*}{1\%}
& No Pre-Training & $2.63\scriptstyle{ \pm 0.22}$ & $30.40\scriptstyle{ \pm 1.02}$ & $7.14\scriptstyle{ \pm 0.35}$ & $2.25\scriptstyle{ \pm 0.11}$ & $1.28\scriptstyle{ \pm 0.16}$ & $1.76\scriptstyle{ \pm 0.13}$ & $2.84\scriptstyle{ \pm 0.20}$ \\
& Random Subset & $3.03\scriptstyle{ \pm 0.20}$ & $34.46\scriptstyle{ \pm 1.13}$ & $7.66\scriptstyle{ \pm 0.67}$ & $2.27\scriptstyle{ \pm 0.22}$ & $1.24\scriptstyle{ \pm 0.05}$ & $1.92\scriptstyle{ \pm 0.26}$ & $2.16\scriptstyle{ \pm 0.47}$ \\
& KRR-ST & $3.32\scriptstyle{ \pm 0.22}$ & $34.24\scriptstyle{ \pm 0.94}$ & $7.84\scriptstyle{ \pm 0.98}$ & $2.14\scriptstyle{ \pm 0.30}$ & $1.30\scriptstyle{ \pm 0.08}$ & $1.95\scriptstyle{ \pm 0.22}$ & $2.21\scriptstyle{ \pm 0.33}$ \\
& \textbf{\method} &  $\textbf{3.87}\scriptstyle{ \pm 0.05}$ & $\textbf{37.25}\scriptstyle{ \pm 0.47}$ & $\textbf{8.95}\scriptstyle{ \pm 0.34}$ & $\textbf{2.30}\scriptstyle{ \pm 0.19}$ & $\textbf{1.36}\scriptstyle{ \pm 0.22}$ & $\textbf{1.99}\scriptstyle{ \pm 0.20}$ & $\textbf{2.86}\scriptstyle{ \pm 0.25}$ \\
\cmidrule(lr){2-9}
& Full Data & $9.42\scriptstyle{ \pm 0.36}$ & $50.52\scriptstyle{ \pm 0.87}$ & $15.18\scriptstyle{ \pm 0.37}$ & $2.57\scriptstyle{ \pm 0.29}$ & $1.59\scriptstyle{ \pm 0.31}$ & $2.20\scriptstyle{ \pm 0.37}$ & $3.29\scriptstyle{ \pm 0.28}$ \\
\midrule
\multirow{5}{*}{5\%} 
& No Pre-Training & $5.69\scriptstyle{ \pm 0.45}$ & $39.91\scriptstyle{ \pm 0.36}$ & $13.32\scriptstyle{ \pm 0.30}$ & $4.46\scriptstyle{ \pm 0.81}$ & $1.71\scriptstyle{ \pm 0.06}$ & $2.51\scriptstyle{ \pm 0.18}$ & $4.83\scriptstyle{ \pm 0.32}$\\
& Random Subset & $6.76\scriptstyle{ \pm 0.16}$ & $43.74\scriptstyle{ \pm 0.82}$ & $13.83\scriptstyle{ \pm 0.13}$ & $4.49\scriptstyle{ \pm 0.91}$ & $1.66\scriptstyle{ \pm 0.14}$ & $2.67\scriptstyle{ \pm 0.31}$ & $4.23\scriptstyle{ \pm 0.70}$ \\
& KRR-ST & $7.13\scriptstyle{ \pm 0.22}$ & $42.44\scriptstyle{ \pm 1.85}$ & $13.85\scriptstyle{ \pm 0.72}$ & $3.99\scriptstyle{ \pm 0.57}$ & $1.77\scriptstyle{ \pm 0.07}$ & $2.47\scriptstyle{ \pm 0.21}$ & $4.14\scriptstyle{ \pm 0.71}$  \\
& \textbf{\method} & $\textbf{7.99}\scriptstyle{ \pm 0.32}$ & $\textbf{45.97}\scriptstyle{ \pm 0.27}$ & $\textbf{16.50}\scriptstyle{ \pm 0.35}$ & $\textbf{4.66}\scriptstyle{ \pm 0.70}$ & $\textbf{2.07}\scriptstyle{ \pm 0.11}$ & $\textbf{2.91}\scriptstyle{ \pm 0.10}$ & $\textbf{5.49}\scriptstyle{ \pm 0.51}$ \\
\cmidrule(lr){2-9}
& Full Data & $18.93\scriptstyle{ \pm 0.34}$ & $58.90\scriptstyle{ \pm 0.43}$ & $26.47\scriptstyle{ \pm 0.78}$ & $5.07\scriptstyle{ \pm 0.71}$ & $2.47\scriptstyle{ \pm 0.06}$ & $3.85\scriptstyle{ \pm 0.19}$ & $7.09\scriptstyle{ \pm 1.01}$ \\
\bottomrule
\end{tabular}
}

\end{table*}

\begin{table*}[h!]
\centering
\caption{Pre-training with Larger Distilled Set Size (5\% of Full Data)}
\label{tab:5per}
\resizebox{\textwidth}{!}{%
\begin{tabular}{llcccccccc}
\toprule
\multirow{2}{2cm}{Pre-Training Dataset} & \multirow{2}{3cm}{Size of Downstream \ Labeled Data (\%)}  & \multirow{2}{*}{Method} & \multicolumn{6}{c}{Downstream Task Accuracy} \\
\cmidrule(lr){4-10}
& & & CIFAR10 & CIFAR100 & Tiny ImageNet  & Aircraft & CUB2011 & Dogs & Flowers \\
\midrule
\multirow{6}{*}{CIFAR10}
& \multirow{2}{*}{1\%} & Random Subset & $37.63 \scriptstyle{ \pm 2.28}$ & $7.63 \scriptstyle{ \pm 0.13}$ & $2.63 \scriptstyle{ \pm 0.07}$ & $2.28 \scriptstyle{ \pm 0.22}$ & $1.13 \scriptstyle{ \pm 0.11}$ & $1.81 \scriptstyle{ \pm 0.18}$ & $2.05 \scriptstyle{ \pm 0.23}$ \\
&  & \rev{KRR-ST} & $\rev{36.69 \scriptstyle{ \pm 0.88}}$ & $\rev{8.69 \scriptstyle{ \pm 0.32}}$ & $\rev{3.20 \scriptstyle{ \pm 0.23}}$ & $\rev{2.26 \scriptstyle{ \pm 0.13}}$ & $\rev{1.33 \scriptstyle{ \pm 0.09}}$ & $\rev{1.91 \scriptstyle{ \pm 0.34}}$ & $\rev{2.39 \scriptstyle{ \pm 0.18}}$ \\
&  & \textbf{\method} & $\textbf{50.23} \scriptstyle{ \pm 1.48}$ & $\textbf{12.33} \scriptstyle{ \pm 0.31}$ & $\textbf{4.27} \scriptstyle{ \pm 0.36}$ & $\textbf{3.11} \scriptstyle{ \pm 0.15}$ & $\textbf{1.59} \scriptstyle{ \pm 0.07}$ & $\textbf{2.26} \scriptstyle{ \pm 0.29}$ & $\textbf{2.42} \scriptstyle{ \pm 0.62}$ \\
\cmidrule(lr){2-10}
& \multirow{2}{*}{5\%} & Random Subset & $48.13 \scriptstyle{ \pm 0.35}$ & $16.06 \scriptstyle{ \pm 0.15}$ & $5.21 \scriptstyle{ \pm 0.56}$ & $5.03 \scriptstyle{ \pm 0.88}$ & $1.71 \scriptstyle{ \pm 0.12}$ & $2.61 \scriptstyle{ \pm 0.20}$ & $3.30 \scriptstyle{ \pm 0.10}$ \\
&  & \rev{KRR-ST} & $\rev{47.40 \scriptstyle{ \pm 0.34}}$ & $\rev{16.95 \scriptstyle{ \pm 0.53}}$ & $\rev{7.10 \scriptstyle{ \pm 0.27}}$ & $\rev{5.56 \scriptstyle{ \pm 0.77}}$ & $\rev{1.98 \scriptstyle{ \pm 0.07}}$ & $\rev{2.78 \scriptstyle{ \pm 0.16}}$ & $\rev{4.38 \scriptstyle{ \pm 0.04}}$ \\
&  & \textbf{\method} & $\textbf{58.37} \scriptstyle{ \pm 0.17}$ & $\textbf{23.15} \scriptstyle{ \pm 0.71}$ & $\textbf{8.84} \scriptstyle{ \pm 0.20}$ & $\textbf{6.79} \scriptstyle{ \pm 0.88}$ & $\textbf{2.30} \scriptstyle{ \pm 0.14}$ & $\textbf{3.34} \scriptstyle{ \pm 0.35}$ & $\textbf{5.94} \scriptstyle{ \pm 0.24}$ \\
\midrule
\multirow{6}{*}{CIFAR100}
& \multirow{2}{*}{1\%} & Random Subset & $42.45 \scriptstyle{ \pm 0.76}$ & $11.55 \scriptstyle{ \pm 0.37}$ & $4.17 \scriptstyle{ \pm 0.15}$ & $2.47 \scriptstyle{ \pm 0.17}$ & $1.51 \scriptstyle{ \pm 0.12}$ & $2.12 \scriptstyle{ \pm 0.18}$ & $2.61 \scriptstyle{ \pm 0.61}$ \\
&  & \rev{KRR-ST} & $\rev{37.86 \scriptstyle{ \pm 1.14}}$ & $\rev{9.02 \scriptstyle{ \pm 0.24}}$ & $\rev{2.94 \scriptstyle{ \pm 0.13}}$ & $\rev{2.42 \scriptstyle{ \pm 0.35}}$ & $\rev{1.50 \scriptstyle{ \pm 0.07}}$ & $\rev{1.99 \scriptstyle{ \pm 0.19}}$ & $\rev{\textbf{3.04} \scriptstyle{ \pm 0.36}}$ \\
&  & \textbf{\method} & $\textbf{47.77} \scriptstyle{ \pm 1.12}$ & $\textbf{13.40} \scriptstyle{ \pm 0.31}$ & $\textbf{4.45} \scriptstyle{ \pm 0.33}$ & $\textbf{2.93} \scriptstyle{ \pm 0.42}$ & $\textbf{1.61} \scriptstyle{ \pm 0.02}$ & $\textbf{2.22} \scriptstyle{ \pm 0.27}$ & $2.59 \scriptstyle{ \pm 1.07}$ \\
\cmidrule(lr){2-10}
& \multirow{2}{*}{5\%} & Random Subset & $51.92 \scriptstyle{ \pm 0.33}$ & $20.34 \scriptstyle{ \pm 0.20}$ & $7.83 \scriptstyle{ \pm 0.24}$ & $6.06 \scriptstyle{ \pm 0.79}$ & $2.27 \scriptstyle{ \pm 0.18}$ & $3.22 \scriptstyle{ \pm 0.39}$ & $5.78 \scriptstyle{ \pm 0.34}$ \\
&  & \rev{KRR-ST} & $\rev{47.53 \scriptstyle{ \pm 0.11}}$ & $\rev{17.24 \scriptstyle{ \pm 0.47}}$ & $\rev{6.60 \scriptstyle{ \pm 0.32}}$ & $\rev{5.37 \scriptstyle{ \pm 0.85}}$ & $\rev{2.31 \scriptstyle{ \pm 0.33}}$ & $\rev{2.87 \scriptstyle{ \pm 0.27}}$ & $\rev{5.23 \scriptstyle{ \pm 0.14}}$ \\
&  & \textbf{\method} & $\textbf{56.61} \scriptstyle{ \pm 0.58}$ & $\textbf{25.18} \scriptstyle{ \pm 0.67}$ & $\textbf{9.12} \scriptstyle{ \pm 0.40}$ & $\textbf{6.66} \scriptstyle{ \pm 0.62}$ & $\textbf{2.66} \scriptstyle{ \pm 0.23}$ & $\textbf{3.66} \scriptstyle{ \pm 0.44}$ & $\textbf{6.93} \scriptstyle{ \pm 0.60}$ \\
\midrule
\multirow{6}{*}{TinyImageNet}
& \multirow{2}{*}{1\%} & Random Subset & $40.33 \scriptstyle{ \pm 1.16}$ & $9.41 \scriptstyle{ \pm 0.29}$ & $4.19 \scriptstyle{ \pm 0.19}$ & $\textbf{2.23} \scriptstyle{ \pm 0.38}$ & $\textbf{1.44} \scriptstyle{ \pm 0.10}$ & $\textbf{2.06} \scriptstyle{ \pm 0.12}$ & $2.77 \scriptstyle{ \pm 0.24}$ \\
&  & \rev{KRR-ST} & $\rev{34.27 \scriptstyle{ \pm 1.36}}$ & $\rev{7.54 \scriptstyle{ \pm 0.35}}$ & $\rev{3.19 \scriptstyle{ \pm 0.22}}$ & $\rev{2.11 \scriptstyle{ \pm 0.23}}$ & $\rev{1.30 \scriptstyle{ \pm 0.12}}$ & $\rev{1.68 \scriptstyle{ \pm 0.20}}$ & $\rev{2.65 \scriptstyle{ \pm 0.64}}$ \\
&  & \textbf{\method} & $\textbf{41.44} \scriptstyle{ \pm 0.85}$ & $\textbf{10.29} \scriptstyle{ \pm 0.38}$ & $\textbf{5.09} \scriptstyle{ \pm 0.45}$ & $2.16 \scriptstyle{ \pm 0.28}$ & $1.29 \scriptstyle{ \pm 0.06}$ & $2.02 \scriptstyle{ \pm 0.28}$ & $\textbf{2.92} \scriptstyle{ \pm 0.49}$ \\
\cmidrule(lr){2-10}
& \multirow{2}{*}{5\%} & Random Subset & $48.46 \scriptstyle{ \pm 0.40}$ & $15.63 \scriptstyle{ \pm 0.62}$ & $8.99 \scriptstyle{ \pm 0.61}$ & $4.55 \scriptstyle{ \pm 0.80}$ & $1.98 \scriptstyle{ \pm 0.18}$ & $\textbf{2.91} \scriptstyle{ \pm 0.47}$ & $5.06 \scriptstyle{ \pm 0.84}$
 \\
&  & \rev{KRR-ST} & $\rev{42.82 \scriptstyle{ \pm 0.46}}$ & $\rev{13.71 \scriptstyle{ \pm 0.30}}$ & $\rev{6.50 \scriptstyle{ \pm 0.23}}$ & $\rev{4.36 \scriptstyle{ \pm 0.49}}$ & $\rev{1.97 \scriptstyle{ \pm 0.06}}$ & $\rev{2.75 \scriptstyle{ \pm 0.37}}$ & $\rev{3.97 \scriptstyle{ \pm 0.14}}$ \\
&  & \textbf{\method} & $\textbf{50.79} \scriptstyle{ \pm 0.47}$ & $\textbf{19.25} \scriptstyle{ \pm 0.23}$ & $\textbf{10.63} \scriptstyle{ \pm 0.23}$ & $\textbf{4.88} \scriptstyle{ \pm 0.65}$ & $\textbf{2.08} \scriptstyle{ \pm 0.03}$ & $2.89 \scriptstyle{ \pm 0.41}$ & $\textbf{5.58} \scriptstyle{ \pm 0.43}$ \\
\midrule

\end{tabular}
}
\end{table*}

\begin{table*}[t]
\centering
\caption{Transfer to Larger Architectures (Pre-Training on CIFAR100 5\%, 5\% Downsteam Labels)}
\label{tab:arch}
\resizebox{\textwidth}{!}{%
\begin{tabular}{llccccccc}
\toprule
\multirow{2}{*} & \multirow{2}{*}{Method} & \multicolumn{1}{c}{Pre-Training} & \multicolumn{5}{c}{Downstream} \\
\cmidrule(lr){3-3} \cmidrule(lr){4-9}
& & CIFAR100 & CIFAR10 & TinyImageNet & Aircraft & CUB2011 & Dogs & Flowers \\
\midrule
\multirow{4}{*}{ResNet-10} 
& No Pre-Training & $1.36\scriptstyle{ \pm 0.31}$ & $13.18\scriptstyle{ \pm 1.74}$ & $1.03\scriptstyle{ \pm 0.07}$ & $1.00\scriptstyle{ \pm 0.01}$ & $0.43\scriptstyle{ \pm 0.13}$ & $0.60\scriptstyle{ \pm 0.01}$ & $0.75\scriptstyle{ \pm 0.14}$ \\
& Random & $18.80\scriptstyle{ \pm 0.58}$ & $44.24\scriptstyle{ \pm 0.85}$ & $10.33\scriptstyle{ \pm 0.16}$ & $\textbf{2.15}\scriptstyle{ \pm 0.34}$ & $\textbf{1.81}\scriptstyle{ \pm 0.16}$ & $2.52\scriptstyle{ \pm 0.29}$ & $5.41\scriptstyle{ \pm 0.61}$ \\
& \rev{KRR-ST} & $\rev{13.84 \scriptstyle{ \pm 0.78}}$ & $\rev{39.21 \scriptstyle{ \pm 0.55}}$ & $\rev{8.04 \scriptstyle{ \pm 0.52}}$ & $\rev{2.12 \scriptstyle{ \pm 0.15}}$ & $\rev{1.16 \scriptstyle{ \pm 0.05}}$ & $\rev{1.77 \scriptstyle{ \pm 0.14}}$ & $\rev{4.56 \scriptstyle{ \pm 0.42}}$ \\
& \textbf{\method} & $\textbf{23.23}\scriptstyle{ \pm 0.79}$ & $\textbf{49.13}\scriptstyle{ \pm 0.69}$ & $\textbf{13.35}\scriptstyle{ \pm 0.24}$ & $1.68\scriptstyle{ \pm 0.10}$ & $1.67\scriptstyle{ \pm 0.09}$ & $\textbf{2.64}\scriptstyle{ \pm 0.16}$ & $\textbf{6.15}\scriptstyle{ \pm 0.47}$ \\
\midrule 
\multirow{4}{*}{ResNet-18} 
& No Pre-Training & $1.01\scriptstyle{ \pm 0.01}$ & $10.00\scriptstyle{ \pm 0.00}$ & $0.91\scriptstyle{ \pm 0.10}$ & $1.01\scriptstyle{ \pm 0.01}$ & $0.56\scriptstyle{ \pm 0.07}$ & $0.67\scriptstyle{ \pm 0.09}$ & $0.93\scriptstyle{ \pm 0.38}$ \\
& Random & $16.82\scriptstyle{ \pm 0.69}$ & $40.11\scriptstyle{ \pm 1.16}$ & $8.95\scriptstyle{ \pm 0.23}$ & $1.84\scriptstyle{ \pm 0.25}$ & $1.62\scriptstyle{ \pm 0.06}$ & $2.40\scriptstyle{ \pm 0.25}$ & $5.16\scriptstyle{ \pm 0.59}$ \\
& \rev{KRR-ST} & $\rev{12.30 \scriptstyle{ \pm 0.83}}$ & $\rev{35.73 \scriptstyle{ \pm 1.07}}$ & $\rev{7.21 \scriptstyle{ \pm 0.35}}$ & $\rev{\textbf{2.32} \scriptstyle{ \pm 0.39}}$ & $\rev{1.18 \scriptstyle{ \pm 0.16}}$ & $\rev{1.81 \scriptstyle{ \pm 0.14}}$ & $\rev{2.45 \scriptstyle{ \pm 0.12}}$ \\
& \textbf{\method} & $\textbf{21.51}\scriptstyle{ \pm 0.17}$ & $\textbf{46.10}\scriptstyle{ \pm 0.60}$ & $\textbf{11.57}\scriptstyle{ \pm 0.17}$ & $2.05\scriptstyle{ \pm 0.43}$ & $\textbf{1.86}\scriptstyle{ \pm 0.05}$ & $\textbf{2.36}\scriptstyle{ \pm 0.29}$ & $\textbf{5.17}\scriptstyle{ \pm 0.93}$ \\
\bottomrule
\end{tabular}
}
\end{table*}

\begin{table*}[h]
\centering
\caption{Ablation over SSL Algorithm (SimCLR), Distilled Set Size 2\%}
\label{tab:cifar100simclr}
\resizebox{\textwidth}{!}{%
\begin{tabular}{llcccccccc}
\toprule
\multirow{2}{2cm}{Pre-Training Dataset} & \multirow{2}{3cm}{Size of Downstream \ Labeled Data (\%)}  & \multirow{2}{*}{Method} & \multicolumn{6}{c}{Downstream Task Accuracy} \\
\cmidrule(lr){4-10}
& & & CIFAR10 & CIFAR100 & Tiny ImageNet  & Aircraft & CUB2011 & Dogs & Flowers \\
\midrule
\multirow{6}{*}{CIFAR10}
& \multirow{2}{*}{1\%} & Random Subset & $35.20 \scriptstyle{ \pm 1.12}$ & $7.35 \scriptstyle{ \pm 0.28}$ & $2.29 \scriptstyle{ \pm 0.14}$ & $2.21 \scriptstyle{ \pm 0.09}$ & $1.19 \scriptstyle{ \pm 0.06}$ & $1.83 \scriptstyle{ \pm 0.16}$ & $1.83 \scriptstyle{ \pm 0.24}$ \\
& & \rev{KRR-ST} & $\rev{36.90 \scriptstyle{ \pm 1.30}}$ & $\rev{8.38 \scriptstyle{ \pm 0.17}}$ & $\rev{2.95 \scriptstyle{ \pm 0.12}}$ & $\rev{2.45 \scriptstyle{ \pm 0.13}}$ & $\rev{1.19 \scriptstyle{ \pm 0.09}}$ & $\rev{1.87 \scriptstyle{ \pm 0.18}}$ & $\rev{\textbf{2.35}\scriptstyle{ \pm 0.06}}$ \\
& & \textbf{\method} & $\textbf{40.77} \scriptstyle{ \pm 1.05}$ & $\textbf{9.17} \scriptstyle{ \pm 0.13}$ & $\textbf{3.06} \scriptstyle{ \pm 0.16}$ & $\textbf{2.69} \scriptstyle{ \pm 0.21}$ & $\textbf{1.35} \scriptstyle{ \pm 0.06}$ & $\textbf{2.02} \scriptstyle{ \pm 0.23}$ & $1.88 \scriptstyle{ \pm 0.22}$ \\
\cmidrule(lr){2-10}
& \multirow{2}{*}{5\%} & Random Subset & $45.69 \scriptstyle{ \pm 0.43}$ & $15.09 \scriptstyle{ \pm 0.39}$ & $5.71 \scriptstyle{ \pm 0.15}$ & $5.21 \scriptstyle{ \pm 1.04}$ & $1.52 \scriptstyle{ \pm 0.18}$ & $2.48 \scriptstyle{ \pm 0.16}$ & $3.36 \scriptstyle{ \pm 0.20}$ \\
& & \rev{KRR-ST} & $\rev{46.87 \scriptstyle{ \pm 0.52}}$ & $\rev{16.29 \scriptstyle{ \pm 0.37}}$ & $\rev{6.31 \scriptstyle{ \pm 0.43}}$ & $\rev{5.31 \scriptstyle{ \pm 0.63}}$ & $\rev{\textbf{1.89} \scriptstyle{ \pm 0.14}}$ & $\rev{2.66 \scriptstyle{ \pm 0.18}}$ & $\rev{\textbf{4.36} \scriptstyle{ \pm 0.16}}$ \\
& & \textbf{\method} & $\textbf{51.77} \scriptstyle{ \pm 0.25}$ & $\textbf{18.07} \scriptstyle{ \pm 0.52}$ & $\textbf{6.55} \scriptstyle{ \pm 0.23}$ & $\textbf{5.90} \scriptstyle{ \pm 0.76}$ & $\textbf{1.89} \scriptstyle{ \pm 0.15}$ & $\textbf{2.98} \scriptstyle{ \pm 0.36}$ & $4.09 \scriptstyle{ \pm 0.28}$ \\
\midrule
\multirow{6}{*}{CIFAR100}
& \multirow{2}{*}{1\%} & Random Subset & $34.67 \scriptstyle{ \pm 0.89}$ & $7.35 \scriptstyle{ \pm 0.54}$ & $2.29 \scriptstyle{ \pm 0.16}$ & $2.23 \scriptstyle{ \pm 0.21}$ & $1.10 \scriptstyle{ \pm 0.06}$ & $1.78 \scriptstyle{ \pm 0.26}$ & $1.77 \scriptstyle{ \pm 0.13}$ \\
& & \rev{KRR-ST} & $\rev{36.57 \scriptstyle{ \pm 1.02}}$ & $\rev{8.38 \scriptstyle{ \pm 0.36}}$ & $\rev{3.01 \scriptstyle{ \pm 0.22}}$ & $\rev{2.41 \scriptstyle{ \pm 0.15}}$ & $\rev{1.28 \scriptstyle{ \pm 0.02}}$ & $\rev{1.71 \scriptstyle{ \pm 0.30}}$ & $\rev{1.98 \scriptstyle{ \pm 0.24}}$ \\
& & \textbf{\method} & $\textbf{39.59} \scriptstyle{ \pm 1.19}$ & $\textbf{9.44} \scriptstyle{ \pm 0.37}$ & $\textbf{3.07} \scriptstyle{ \pm 0.08}$ & $\textbf{2.60} \scriptstyle{ \pm 0.23}$ & $\textbf{1.33} \scriptstyle{ \pm 0.11}$ & $\textbf{1.93} \scriptstyle{ \pm 0.27}$ & $\textbf{2.49} \scriptstyle{ \pm 0.06}$ \\
\cmidrule(lr){2-10}
& \multirow{2}{*}{5\%} & Random Subset & $45.67 \scriptstyle{ \pm 0.69}$ & $15.11 \scriptstyle{ \pm 0.44}$ & $5.21 \scriptstyle{ \pm 0.29}$ & $5.21 \scriptstyle{ \pm 0.67}$ & $1.51 \scriptstyle{ \pm 0.14}$ & $2.54 \scriptstyle{ \pm 0.19}$ & $3.37 \scriptstyle{ \pm 0.47}$ \\
& & \rev{KRR-ST} & $\rev{46.76 \scriptstyle{ \pm 0.50}}$ & $\rev{15.75 \scriptstyle{ \pm 0.46}}$ & $\rev{6.17 \scriptstyle{ \pm 0.20}}$ & $\rev{5.43 \scriptstyle{ \pm 0.65}}$ & $\rev{\textbf{1.93} \scriptstyle{ \pm 0.06}}$ & $\rev{2.61 \scriptstyle{ \pm 0.25}}$ & $\rev{3.55 \scriptstyle{ \pm 0.29}}$ \\
& & \textbf{\method} & $\textbf{49.87} \scriptstyle{ \pm 0.75}$ & $\textbf{18.47} \scriptstyle{ \pm 0.21}$ & $\textbf{6.65} \scriptstyle{ \pm 0.21}$ & $\textbf{5.56} \scriptstyle{ \pm 0.86}$ & $\textbf{1.93} \scriptstyle{ \pm 0.13}$ & $\textbf{2.98} \scriptstyle{ \pm 0.32}$ & $\textbf{4.83} \scriptstyle{ \pm 0.27}$ \\
\midrule

\end{tabular}
}
\end{table*}

\begin{table*}[!t]
\centering
\caption{Larger Labeled Data Fractions (10\%, 50\%), Distilled Set Size 2\%}
\label{tab:larger_label_data_fraction}
\resizebox{\textwidth}{!}{%
\begin{tabular}{llcccccccc}
\toprule
\multirow{2}{2cm}{Pre-Training Dataset} & \multirow{2}{3cm}{Size of Downstream \ Labeled Data (\%)}  & \multirow{2}{*}{Method} & \multicolumn{6}{c}{Downstream Task Accuracy} \\
\cmidrule(lr){4-10}
& & & CIFAR10 & CIFAR100 & Tiny ImageNet  & Aircraft & CUB2011 & Dogs & Flowers \\
\midrule
\multirow{6}{*}{CIFAR10}
& \multirow{2}{*}{10\%} & Random Subset & $51.21 \pm \scriptstyle{0.38}$ & $19.75 \pm \scriptstyle{0.36}$ & $8.04 \pm \scriptstyle{0.45}$ & $7.42 \pm \scriptstyle{0.43}$ & $2.15 \pm \scriptstyle{0.16}$ & $3.10 \pm \scriptstyle{0.30}$ & $5.22 \pm \scriptstyle{0.96}$ \\
&  & \rev{KRR-ST} & $\rev{51.02 \pm \scriptstyle{0.53}}$ & $\rev{21.01 \pm \scriptstyle{0.14}}$ & $\rev{8.95 \pm \scriptstyle{0.26}}$ & $\rev{7.91 \pm \scriptstyle{0.76}}$ & $\rev{2.44 \pm \scriptstyle{0.12}}$ & $\rev{3.54 \pm \scriptstyle{0.29}}$ & $\rev{6.82 \pm \scriptstyle{0.64}}$ \\
&  & \textbf{\method} & $\textbf{56.88} \pm \scriptstyle{0.85}$ & $\textbf{24.61} \pm \scriptstyle{0.42}$ & $\textbf{9.76} \pm \scriptstyle{0.28}$ & $\textbf{8.96} \pm \scriptstyle{0.71}$ & $\textbf{2.78} \pm \scriptstyle{0.22}$ & $\textbf{4.37} \pm \scriptstyle{0.37}$ & $\textbf{7.38} \pm \scriptstyle{1.00}$ \\
\cmidrule(lr){2-10}
& \multirow{2}{*}{50\%} & Random Subset & $57.18 \pm \scriptstyle{0.63}$ & $26.86 \pm \scriptstyle{0.36}$ & $15.16 \pm \scriptstyle{0.03}$ & $16.11 \pm \scriptstyle{0.90}$ & $4.18 \pm \scriptstyle{0.05}$ & $6.17 \pm \scriptstyle{0.08}$ & $12.98 \pm \scriptstyle{0.35}$ \\
&  & \rev{KRR-ST} & $\rev{58.09 \pm \scriptstyle{0.07}}$ & $\rev{29.01 \pm \scriptstyle{0.30}}$ & $\rev{15.94 \pm \scriptstyle{0.31}}$ & $\rev{17.60 \pm \scriptstyle{1.04}}$ & $\rev{5.01 \pm \scriptstyle{0.34}}$ & $\rev{6.81 \pm \scriptstyle{0.35}}$ & $\rev{15.92 \pm \scriptstyle{0.64}}$ \\
&  & \textbf{\method} & $\textbf{63.63} \pm \scriptstyle{0.17}$ & $\textbf{34.06} \pm \scriptstyle{0.39}$ & $\textbf{17.57} \pm \scriptstyle{0.37}$ & $\textbf{19.43} \pm \scriptstyle{0.93}$ & $\textbf{5.43} \pm \scriptstyle{0.22}$ & $\textbf{7.63} \pm \scriptstyle{0.17}$ & $\textbf{16.89} \pm \scriptstyle{0.37}$ \\
\midrule

\multirow{6}{*}{CIFAR100}
& \multirow{2}{*}{10\%} & Random Subset & $52.23 \pm \scriptstyle{0.38}$ & $21.56 \pm \scriptstyle{0.62}$ & $8.57 \pm \scriptstyle{0.43}$ & $8.01 \pm \scriptstyle{0.61}$ & $2.73 \pm \scriptstyle{0.17}$ & $3.94 \pm \scriptstyle{0.28}$ & $7.50 \pm \scriptstyle{1.33}$ \\
&  & \rev{KRR-ST} & $\rev{52.40 \pm \scriptstyle{0.73}}$ & $\rev{21.39 \pm \scriptstyle{0.16}}$ & $\rev{8.21 \pm \scriptstyle{0.07}}$ & $\rev{7.64 \pm \scriptstyle{0.36}}$ & $\rev{2.34 \pm \scriptstyle{0.12}}$ & $\rev{3.76 \pm \scriptstyle{0.25}}$ & $\rev{6.52 \pm \scriptstyle{1.28}}$ \\
&  & \textbf{\method} & $\textbf{57.67} \pm \scriptstyle{0.33}$ & $\textbf{27.28} \pm \scriptstyle{0.18}$ & $\textbf{11.05} \pm \scriptstyle{0.50}$ & $\textbf{9.25} \pm \scriptstyle{0.70}$ & $\textbf{3.42} \pm \scriptstyle{0.16}$ & $\textbf{4.43} \pm \scriptstyle{0.38}$ & $\textbf{9.35} \pm \scriptstyle{1.28}$ \\
\cmidrule(lr){2-10}
& \multirow{2}{*}{50\%} & Random Subset & $60.39 \pm \scriptstyle{0.17}$ & $31.62 \pm \scriptstyle{0.32}$ & $16.02 \pm \scriptstyle{0.14}$ & $18.10 \pm \scriptstyle{0.12}$ & $5.65 \pm \scriptstyle{0.18}$ & $7.37 \pm \scriptstyle{0.23}$ & $16.18 \pm \scriptstyle{0.31}$ \\
&  & \rev{KRR-ST} & $\rev{58.57 \pm \scriptstyle{0.78}}$ & $\rev{29.46 \pm \scriptstyle{0.81}}$ & $\rev{15.70 \pm \scriptstyle{0.07}}$ & $\rev{15.89 \pm \scriptstyle{0.54}}$ & $\rev{4.82 \pm \scriptstyle{0.41}}$ & $\rev{7.00 \pm \scriptstyle{0.18}}$ & $\rev{15.07 \pm \scriptstyle{0.46}}$ \\
&  & \textbf{\method} & $\textbf{65.85} \pm \scriptstyle{0.33}$ & $\textbf{38.09} \pm \scriptstyle{0.35}$ & $\textbf{17.46} \pm \scriptstyle{0.24}$ & $\textbf{20.73} \pm \scriptstyle{0.24}$ & $\textbf{6.67} \pm \scriptstyle{0.05}$ & $\textbf{8.48} \pm \scriptstyle{0.21}$ & $\textbf{21.15} \pm \scriptstyle{0.50}$ \\
\midrule

\multirow{6}{*}{TinyImageNet}
& \multirow{2}{*}{10\%} & Random Subset & $44.08 \pm \scriptstyle{1.91}$ & $16.43 \pm \scriptstyle{0.12}$ & $8.84 \pm \scriptstyle{0.85}$ & $5.57 \pm \scriptstyle{0.61}$ & $2.12 \pm \scriptstyle{0.30}$ & $3.07 \pm \scriptstyle{0.17}$ & $6.51 \pm \scriptstyle{1.27}$ \\
 & & \rev{KRR-ST} & $\rev{45.48 \pm \scriptstyle{0.84}}$ & $\rev{17.02 \pm \scriptstyle{0.26}}$ & $\rev{8.88 \pm \scriptstyle{0.41}}$ & $\rev{5.29 \pm \scriptstyle{0.16}}$ & $\rev{2.08 \pm \scriptstyle{0.21}}$ & $\rev{3.21 \pm \scriptstyle{0.15}}$ & $\rev{6.10 \pm \scriptstyle{1.44}}$ \\
 & & \textbf{\method} & $\textbf{49.33} \pm \scriptstyle{0.49}$ & $\textbf{20.19} \pm \scriptstyle{0.45}$ & $\textbf{10.89} \pm \scriptstyle{0.63}$ & $\textbf{6.61} \pm \scriptstyle{0.28}$ & $\textbf{2.47} \pm \scriptstyle{0.18}$ & $\textbf{3.86} \pm \scriptstyle{0.36}$ & $\textbf{7.33} \pm \scriptstyle{1.34}$ \\
\cmidrule(lr){2-10}
& \multirow{2}{*}{50\%} & Random Subset & $47.35 \pm \scriptstyle{0.92}$ & $19.28 \pm \scriptstyle{0.61}$ & $14.01 \pm \scriptstyle{0.38}$ & $8.74 \pm \scriptstyle{0.78}$ & $3.61 \pm \scriptstyle{0.31}$ & $5.18 \pm \scriptstyle{0.29}$ & $13.60 \pm \scriptstyle{1.13}$ \\
&  & \rev{KRR-ST} & $\rev{48.16 \pm \scriptstyle{1.18}}$ & $\rev{20.01 \pm \scriptstyle{0.40}}$ & $\rev{13.59 \pm \scriptstyle{0.47}}$ & 
$\rev{8.66 \pm \scriptstyle{1.29}}$ & $\rev{3.46 \pm \scriptstyle{0.22}}$ & $\rev{4.98 \pm \scriptstyle{0.31}}$ & $\rev{15.12 \pm \scriptstyle{0.37}}$ \\
& & \textbf{\method} & $\textbf{51.98} \pm \scriptstyle{0.44}$ & $\textbf{24.41} \pm \scriptstyle{0.39}$ & $\textbf{16.99} \pm \scriptstyle{0.25}$ & $\textbf{10.84} \pm \scriptstyle{0.25}$ & $\textbf{4.44} \pm \scriptstyle{0.28}$ & $\textbf{6.22} \pm \scriptstyle{0.15}$ & $\textbf{16.47} \pm \scriptstyle{0.46}$ \\
\midrule

\end{tabular}
}
\end{table*}

In this section, we evaluate the downstream generalization of models trained using the synthetic sets distilled by \method\ that are 2\% and 5\% of the original dataset's size for CIFAR10, CIFAR100 \citep{cifar}, TinyImageNet \citep{tinyimagenet}. We also conduct ablation studies over initialization of the distilled set and the SSL algorithm. Finally, we also consider the generalization of the distilled sets to larger architectures. 

\textbf{Distillation Setup} We use ResNet-18 trained with Barlow Twins \citep{zbontar2021barlow} as the teacher encoder and train $K=100$ student encoders (using ConvNets) to generate the expert trajectories. As in previous work \citep{mtt, zhao_dsa, zhao_dc, chen2023data, ftd, tesla}, we use a 3-layer ConvNet for the lower resolution CIFAR datasets and a 4-layer ConvNet for the higher resolution TinyImageNet. Exact hyperparamters in Appendix \ref{app:experiments}.

\textbf{Evaluation Setup} To test generalization in presence of limited labeled data, we evaluate encoders pre-trained on the distilled data using linear probe on 1\% and 5\% of downstream labeled data. \vspace{-1mm}

\textbf{Baselines} We compare pre-training with \method\ distilled sets to pre-training with random subsets, SAS subsets \citep{pmlr-v202-joshi23b}, KRR-ST distilled sets \citep{lee2023selfsupervised} as well as pre-training on the full data. For KRR-ST, we use the provided distilled sets for CIFAR100 and Tiny ImageNet, and distill using the provided code for CIFAR10. We omit SAS for TinyImageNet as this subset was not provided. For distilled sets of size 5\%, we consider only Random and \method, since other baselines did not provide distilled sets. \looseness=-1 \vspace{-1mm}


\textbf{Downstream Generalization Performance} Table \ref{tab:cifar10} demonstrates that pre-training on CIFAR10 using \method\ with a 2\% distilled set improves performance by 8\% on CIFAR10 and 5\% on downstream tasks over the KRR-ST baseline. Gains on CIFAR10 are consistent across 1\% and 5\% labeled data, but improvements on downstream tasks are more pronounced with 5\%, indicating \method\ scales well with more labeled data. On CIFAR100, \method\ 2\% distilled set improves performance by 6\% and 8\% on CIFAR100 and downstream tasks, respectively. Additionally, \method\ shows up to 3\% improvement on downstream tasks for larger, higher-resolution datasets like TinyImageNet (200K examples, 64x64 resolution), highlighting \method's scalability. KRR-ST consistently fails to outperform SSL pre-training on random subsets across all settings. In Appendix \ref{app:krrst}, we verify that this holds for fine-tuning experiments from KRR-ST \citep{lee2023selfsupervised}, affirming \method\ as the only effective DD method for SSL pre-training. Table \ref{tab:5per} shows that pre-training with larger distilled sets (5\% of full data) further enhances performance by up to 13\%, confirming \method\ scales effectively with distilled set size as well. Table \ref{tab:larger_label_data_fraction} shows that \method\ outperforms the strongest baseline (random subsets) by 5\% on pre-training and 7\% on downstream tasks when using 10\% and 50\% labeled data.\looseness=-1

\textbf{Ablations} We perform ablations over two factors: 1) initialization and 2) SSL algorithm. Table \ref{tab:cifar100} presents results for pre-training with \method\ using random subset initialization, as well as results for pre-training directly on the high-loss subset initialization used by \method. Interestingly, the KD objective for SSL pre-training benefits slightly more from the high-loss subset than from random subsets. Consequently, \method\ initialized from the high-loss subset performs better than when initialized from a random subset. Table \ref{tab:cifar100simclr} shows results for MKDT using a teacher model trained with SimCLR \citep{chen2020simple} instead of BarlowTwins. Specifically, we train a ResNet-18 for 400 epochs using SimCLR. Here too, \method\ achieves approximately 6\% higher performance compared to random subsets across downstream datasets. This confirms that \method\ generalizes across different SSL training algorithms.



\textbf{Generalization to Larger Architectures} Table \ref{tab:arch} compares CIFAR100 5\% size \method\ distilled set to 5\% size random subsets, using the larger ResNet-10 and ResNet-18 architectures.  Across all downstream tasks, we confirm that \method\ distilled sets outperform baselines even when using larger architectures. Surprisingly, the larger ResNet-18 slightly under-performs the smaller ResNet-10. This trend is observed across all baselines, including no pre-training. We conjecture this is due to larger models requiring a lot more data to be able to use their extra capacity to surpass their smaller counterparts. \looseness=-1



\vspace{-3mm}
\section{Conclusion}
\vspace{-3mm}
To conclude, we propose \method, the first {effective} approach for dataset distillation for SSL pre-training. We demonstrated the challenges of na\"ively adapting previous supervised distillation method and showed how knowledge distillation with trajectory matching can remedy these problems. Empirically, we showed up to 13\% improvement in downstream accuracy when pre-training with \method\ distilled sets over the next best baseline. Thus, we enable highly data-efficient SSL pre-training. 

\textbf{Acknowledgments} This research was partially supported by the National Science Foundation CAREER Award 2146492, and an Amazon PhD fellowship.

\clearpage
\bibliography{references}
\bibliographystyle{iclr2025_conference}

\clearpage
\appendix
\section{Experiment Details} \label{app:experiments}

\subsection{Additional Details for Experiments in Tables \ref{tab:cifar10}, \ref{tab:cifar100}, \ref{tab:tiny}, \ref{tab:5per}, \ref{tab:arch}, \ref{tab:cifar100simclr}, \ref{tab:larger_label_data_fraction}} \label{app:experiments:eval_hparams}

\textbf{Training the Teacher Model Using SSL} 
We trained the teacher model using BarlowTwins \citep{zbontar2021barlow} using the training setup ResNet18 specified in \citep{gedara2023guarding}. We used the Adam optimizer with batch size 256, learning rate 0.01, cosine annealing learning rate schedule, and weight decay $10^{-6}$. The feature dimension is 1024. Finally, we use the pre-projection head representation of the trained model for teacher representation, and its dimension is of size 512.  

\textbf{Training \textit{Expert} Trajectories Using KD}
We trained 100 expert trajectories for each dataset with random initialization of the network for 20 epochs, using Stochastic Gradient Descent with learning rate 0.1, momentum 0.9, and weight decay 1e-4. Similar to other DD works \citep{mtt, lee2023selfsupervised}, we used depth 4 ConvNet for Tiny ImageNet and depth 3 ConvNet for both CIFAR 10 and CIFAR 100. We did not apply any augmentation except normalization, and did not apply the ZCA-whitening. 

\textbf{Distillation Hyperparameters} We distilled 2\% of CIFAR 10, CIFAR 100, and Tiny ImageNet. We used SGD for optimizing the synthetic images with batch size 256, momentum 0.5. We distilled CIFAR 10 and CIFAR 100 with depth 3 ConvNet and Tiny ImageNet with depth 4 ConvNet. We initialize the synthetic learning rate as 0.1 and used SGD with learning rate $10^{-4}$ and momentum 0.5 to update it. We distilled the datasets for 5000 iterations and evaluated their performance for all the experiments except those in Table \ref{tab:arch}, where we use the distilled dataset after 1000 iterations. The other hyper-parameters are recorded in Table \ref{tab:hparams}. 

\begin{table}[htbp]
\caption{\method\ Hyperparameters on 2\% Distilled Set}
\label{tab:hparams}
\centering
\resizebox{0.5\textwidth}{!}{%
\begin{tabular}{@{}c|c|c|c@{}}
\toprule
& CIFAR10 & CIFAR100 & TinyImageNet \\
\midrule
Percentage Distilled & 2\% & 2\% & 2\% \\
Model & ConvNetD3 & ConvNetD3 & ConvNetD4 \\
Synthetic Steps $(N)$ & 40 & 40 & 10 \\
Expert Epochs $(M)$ & 2 & 2 & 2 \\
Max Start Epoch $(T^+)$ & 5 & 2 & 2 \\
Learning Rate (Pixels) & $10^3$ & $10^3$ & $10^5$ \\
\bottomrule
\end{tabular}
}
\end{table}



\textbf{Pre-training on Synthetic Data} For the synthetic data, we pre-train them using the MSE loss with their learned representation for 20 epochs using SGD with batch size 256, their distilled learning rate, momentum 0.9, and weight decay $10^{-4}$. We use a depth 3 ConvNet for CIFAR 10 and CIFAR 100, and a depth 4 ConvNet for Tiny ImageNet. For distilled datasets, we use the synthetic learning rate $\asyn$. For other datasets (e.g., random subset), we use the same setting except a learning rate of 0.1.

\textbf{Evaluation} We use the models' penultimate layer's representations (as is standard in contrastive learning \citep{zbontar2021barlow, chen2020simple}) of the downstream task's training set and train a linear classifier, using LBFGS with 100 iterations and regularization weight $10^{-3}$. We then use the pre-projection head representations of the test set of the downstream task and evaluate using the aforementioned linear classifer to report the downstream test accuracy.

\textbf{Using SimCLR for Obtaining Teacher Representation.} We conducted an ablation study using a teacher model trained with SimCLR \citep{chen2020simple} instead of Barlow Twins \citep{zbontar2021barlow} for CIFAR 10 and CIFAR 100. The experiment steps are similar to the ones in \ref{app:experiments:eval_hparams}. During the "Training the Teacher Model Using SSL" step
, we used the Adam optimizer with batch size 512, learning rate 0.001, and weight decay $10^{-6}$ to train a ResNet18 along with a 2-layer linear projection head for 400 epochs. The projection head included Batch Normalization and ReLU after the first layer, and Batch Normalization after the second layer, projecting to 128 dimensions. Then, we used the pre-projection head representation of the trained model for getting the teacher representation of size 512. The other steps are the same as the one in \ref{app:experiments:eval_hparams}. Table \ref{tab:cifar100simclr} shows that MKDT consistently outperforms the random subset across all downstream datasets and various sizes of labeled data, highlighting the method's generalizability to other contrastive learning methods.

\textbf{Scaling the Method to Larger Distillation Sets.} In addition to distilling 2\% subsets, we also conducted experiments distilling 5\% subsets of CIFAR10, CIFAR100, and TinyImageNet to evaluate the generalizability of the method to larger distillation sets. Table  \ref{tab:5per} shows the scalability of the method to larger distilled set sizes. The experiments procedure are the same as the ones in \ref{app:experiments:eval_hparams} except that we use different distillation hyperparameters for the 5\% distilled set. The hyperparameters are summarized in Table \ref{tab:hparams_5per}.

\begin{table}[!t]
\caption{\method\ Hyperparameters on 5\% Distilled Set}
\label{tab:hparams_5per}
\centering
\resizebox{0.5\textwidth}{!}{%
\begin{tabular}{@{}c|c|c|c@{}}
\toprule
& CIFAR10 & CIFAR100 & TinyImageNet \\
\midrule
Percentage Distilled & 5\% & 5\% & 5\% \\
Model & ConvNetD3 & ConvNetD3 & ConvNetD4 \\
Synthetic Steps $(N)$ & 40 & 40 & 10 \\
Expert Epochs $(M)$ & 2 & 2 & 2 \\
Max Start Epoch $(T^+)$ & 5 & 5 & 2 \\
Learning Rate (Pixels) & $10^4$ & $10^4$ & $10^5$ \\
\bottomrule
\end{tabular}
}
\end{table}

\textbf{Scaling the Method to Larger Downstream Labeled Dataset Sizes.} We evaluated the performance for CIFAR 10, CIFAR 100, and TinyImageNet on larger downstream labeled data sizes, specifically 10\% and 50\% labeled data sizes, using the 2\% distilled set obtained with the method illustrated in \ref{app:experiments:eval_hparams}. As shown in Table \ref{tab:larger_label_data_fraction}, MKDT continues to outperform random subset across all downstream datasets and data sizes, demonstrating its scalability with larger labeled data sizes.

\rev{\textbf{Details for KRR-ST \cite{lee2023selfsupervised}}
We use the code and hyper-parameters provided in \citep{lee2023selfsupervised}. As the original paper did not provide the results and the hyperparameters for CIFAR 10, we use the same hyperparameters as CIFAR 100 to distill CIFAR 10. In particular, this invovles using BarlowTwins ResNet-18 as the teacher model as well.}

\subsection{Additional Details on Experiments in Fig. \ref{fig:explain}}

For the experiment in Figure \ref{fig:high_var_traj}, we train 5 trajectories of each of MTT SSL and MTT SL for CIFAR 100 using the same random initialization of the network, respectively. For MTT SSL, we train the models with the Adam optimizer with batch size 1024, learning rate 0.001, and weight decay $10^{-6}$. For MTT SL, we train the model with SGD with batch size 256, learning rate 0.01, momentum 0, and no weight decay. 

For both of the experiments in Figure \ref{fig:simple_distill_loss}  and \ref{fig:simple_distill_image_change}, we distill the dataset using MTT SL with image learning rate 1000, max start epoch 0, synthetic steps 20, and expert epochs 4. We distill using MTT SLL with image learning rate 1000, max start epoch 0,  synthetic steps 10, and expert epochs 2. We distilled them for 100 iterations and record the change in the loss function and the average absolute change in pixels.

\clearpage 

\section{Additional Comparison of Random Subset SSL Pre-Training to KRR-ST}\label{app:krrst}

This experiment is conducted in the setting of \cite{lee2023selfsupervised} for pre-training on CIFAR100. This experiment pre-trains on a distilled set / subset of 2\% the size of CIFAR100 and then evaluates on downstream tasks by finetuning the entire network with the entire labeled dataset for the downstream task. KRR-ST compares with the random subset baseline by pre-training using SL. However, we use SSL on the random subset as a baseline instead. Here, we show that, in fact, even in the finetuning setting reported by \cite{lee2023selfsupervised}, KRR-ST cannot outperform SSL pre-training on random real images. The baseline reported as `Random' in \cite{lee2023selfsupervised} refers to SL pre-training as opposed to SSL pre-training. 

\begin{table*}[h]
\centering
\caption{Pre-training on CIFAR100}
\label{tab:krrst_c100}
\resizebox{0.95\textwidth}{!}{%
\begin{tabular}{lccccccc}
\toprule
\multirow{2}{*}{Method} & \multicolumn{1}{c}{Pre-Training} & \multicolumn{6}{c}{Downstream} \\
\cmidrule(lr){2-2} \cmidrule(lr){3-8} & CIFAR100 & CIFAR10 & Aircraft & Cars & CUB2011 & Dogs & Flowers \\
\midrule
No pre-training (from \cite{lee2023selfsupervised}) & $64.95$\tiny$\pm0.21$ & $87.34$\tiny$\pm0.13$ & $34.66$\tiny$\pm0.39$ & $19.43$\tiny$\pm0.14$ & $18.46$\tiny$\pm0.11$ & $22.31$\tiny$\pm0.22$ & $58.75$\tiny$\pm0.41$ \\
Random (Supervised Learning (from \cite{lee2023selfsupervised}) & $65.23$\tiny$\pm0.12$ & $87.55$\tiny$\pm0.19$ & $33.99$\tiny$\pm0.45$ & $19.77$\tiny$\pm0.21$ & $18.18$\tiny$\pm0.21$ & $21.69$\tiny$\pm0.18$ & $59.31$\tiny$\pm0.27$ \\
KRR-ST (from \cite{lee2023selfsupervised}) & $66.81$\tiny$\pm0.11$ & $88.72$\tiny$\pm0.11$ & $41.54$\tiny$\pm0.37$ & $28.68$\tiny$\pm0.32$ & $25.30$\tiny$\pm0.37$ & $26.39$\tiny$\pm0.08$ & $67.88$\tiny$\pm0.18$ \\
\textbf{Random (SSL Pre-training)} & $66.44$\tiny$\pm0.14$ & $88.74$\tiny$\pm0.20$ & $42.02$\tiny$\pm0.06$ & $28.75$\tiny$\pm0.23$ & $25.12$\tiny$\pm0.19$ & $26.57$\tiny$\pm0.22$ & $68.21$\tiny$\pm0.40$ \\
\bottomrule
\end{tabular}
}
\end{table*}



\clearpage
\section{Proof for Theorem \ref{thm:var_grad_toy}}\label{app:theory}

\begin{proof}
    To analyze the variance of the gradient for both SL and SSL, we will conduct a case analysis. There are 3 possible unique cases for constructing the mini-batch. Assuming $n$ is extremely large s.t. $\frac{n/2}{n} \approx \frac{n/2 - 1}{n}$, we have: 
    \begin{enumerate}
        \item Both examples are from class 0 with probability = $\frac{1}{4}$
        \item Both examples are from class 1 with probability = $\frac{1}{4}$
        \item 1 example from each class with probability = $\frac{1}{2}$
    \end{enumerate}

Let $x_1, y_1$ refer to example 1 and its corresponding label vector; similarly, let $x_2, y_2$ refer to example 2 and its corresponding label vector.

To show the effect of each term, we refer to the class mean vectors for class 0 and class 1 as $\mu_0$ and $\mu_1$, respectively; and use $e_0$ and $e_1$, the basis vectors to represent the labels for class $0$ and class $1$.

\textbf{Analyzing cases for SL}

\begin{align}
    \nabla_W(L_{SL}(B)) &= \frac{1}{2} \sum_{i=1}^{2} 2 (Wx_i - y_i) x_i^\top \\
    &= (Wx_1 - y_1) (x_1)^\top + (Wx_2 - y_2) (x_2)^\top 
\end{align}

\textit{Case 1: Both examples from class 0}

\begin{align}
    \nabla_W(L_{SL}(B)) &= (Wx_1 - e_0) (x_1)^\top + (Wx_2 - e_0) (x_2)^\top
\end{align}

\begin{align}
    \E[\nabla_W(L_{SL}(B))] &= \E[(Wx_1 - e_0) (x_1)^\top + (Wx_2 - e_0) (x_2)^\top] \\
    &= \E[(Wx_1 - e_0) (x_1)^\top] + \E[Wx_2 - e_0) (x_2)^\top] \\
    &= \E[(W\mu_0 + W\epsilon_{N_1} - e_0) (e_0 + \epsilon_{N_1})^\top] + \E[(W\mu_0 + W\epsilon_{N_2} - e_0) (e_0 + \epsilon_{N_2})^\top] \\
    &= \E[(\mu_0 + \epsilon_{N_1} - e_0) (\mu_0 + \epsilon_{N_1})^\top] + \E[(\mu_0 + \epsilon_{N_2} - e_0) (\mu_0 + \epsilon_{N_2})^\top] \text{by substituting } W = I \\ 
    &= \E[\mu_0 \mu_0^\top + \epsilon_{N_1} \mu_0^\top - e_0 \mu_0^\top +  \mu_0 \epsilon_{N_1}^\top + \epsilon_{N_1} \epsilon_{N_1}^\top - e_0 \epsilon_{N_1}^\top] \nonumber \\ 
    &+ \E[\mu_0 \mu_0^\top + \epsilon_{N_2} \mu_0^\top - e_0 \mu_0^\top +  \mu_0 \epsilon_{N_2}^\top + \epsilon_{N_2} \epsilon_{N_2}^\top - e_0 \epsilon_{N_2}^\top] \\
    &= 2 (\mu_0 \mu_0^\top - e_0 \mu_0^T)
\end{align}

\begin{align}
    \nabla_W(L_{SL}(B)) &= \mu_0 \mu_0^\top + \epsilon_{N_1} \mu_0^\top - e_0 \mu_0^\top +  \mu_0 \epsilon_{N_1}^\top + \epsilon_{N_1} \epsilon_{N_1}^\top - e_0 \epsilon_{N_1}^\top \nonumber \\ 
    &\quad + \mu_0 \mu_0^\top + \epsilon_{N_2} \mu_0^\top - e_0 \mu_0^\top +  \mu_0 \epsilon_{N_2}^\top + \epsilon_{N_2} \epsilon_{N_2}^\top - e_0 \epsilon_{N_2}^\top \nonumber \\
    &= 2\mu_0 \mu_0^\top + (\epsilon_{N_1} + \epsilon_{N_2}) \mu_0^\top - 2e_0 \mu_0^\top + \mu_0 (\epsilon_{N_1}^\top + \epsilon_{N_2}^\top) \nonumber \\
    &\quad + \epsilon_{N_1} \epsilon_{N_1}^\top + \epsilon_{N_2} \epsilon_{N_2}^\top - e_0 (\epsilon_{N_1}^\top + \epsilon_{N_2}^\top)
\end{align}



\textit{Case 2: Both examples from class 1}

By symmetry, we have: 
\begin{align}
    \E[\nabla_W(L_{SL}(B))] &= 2(\mu_1 \mu_1^\top - e_1 \mu_1^\top)
\end{align}

and 

\begin{align}
    \nabla_W{L_{SL}(B)} &= 2\mu_1 \mu_1^\top + (\epsilon_{N_1} + \epsilon_{N_2}) \mu_1^\top - 2e_1 \mu_1^\top + \mu_1 (\epsilon_{N_1}^\top + \epsilon_{N_2}^\top) \nonumber \\
    &\quad + \epsilon_{N_1} \epsilon_{N_1}^\top + \epsilon_{N_2} \epsilon_{N_2}^\top - e_1 (\epsilon_{N_1}^\top + \epsilon_{N_2}^\top)
\end{align}

\textit{Case 3: 1 example from each class}

\begin{align}
    \E[\nabla_W(L_{SL}(B))] &= \mu_0 \mu_0^\top - e_0 \mu_0^\top + \mu_1 \mu_1^\top - e_1 \mu_1^\top
\end{align}

and 

\begin{align}
    \nabla_W{L_{SL}(B)} &= \mu_0 \mu_0^\top + \epsilon_{N_1} \mu_0^\top - e_0 \mu_0^\top +  \mu_0 \epsilon_{N_1}^\top + \epsilon_{N_1} \epsilon_{N_1}^\top - e_0 \epsilon_{N_1}^\top \nonumber \\ 
    &\quad + \mu_1 \mu_1^\top + \epsilon_{N_2} \mu_1^\top - e_1 \mu_1^\top +  \mu_1 \epsilon_{N_2}^\top + \epsilon_{N_2} \epsilon_{N_2}^\top - e_1 \epsilon_{N_2}^\top \nonumber \\
\end{align}



\textit{Putting it together}

\begin{align}
\E[\nabla_W(L_{SL}(B))] &= \frac{1}{4} \cdot 2(\mu_0 \mu_0^\top - e_0 \mu_0^\top) \\
&\quad + \frac{1}{4} \cdot 2(\mu_1 \mu_1^\top - e_1 \mu_1^\top) \\
&\quad + \frac{1}{2} \left( \mu_0 \mu_0^\top - e_0 \mu_0^\top + \mu_1 \mu_1^\top - e_1 \mu_1^\top \right) \\ 
&=  \mu_0 \mu_0^\top - e_0 \mu_0^\top + \mu_1 \mu_1^\top - e_1 \mu_1^\top
\end{align}

Finally, 
\begin{align}
    Var(\nabla_W(L_{SL}(B))) &= \E[\|\nabla_W(L_{SL}(B)) - \E[\nabla_W(L_{SL}(B))] \|^2] \\
    &= \frac{1}{4} \E[(\|\nabla_W(L_{SL}(B)) - \E[\nabla_W(L_{SL}(B))] \|^2) | \text{case 1}] \nonumber \\
    &\quad + \frac{1}{4} \E[(\|\nabla_W(L_{SL}(B)) - \E[\nabla_W(L_{SL}(B))] \|^2) | \text{case 2}] \nonumber \\ 
    &\quad + \frac{1}{2} \E[(\|\nabla_W(L_{SL}(B)) - \E[\nabla_W(L_{SL}(B))] \|^2) | \text{case 3}] 
\end{align}

\textit{Simplifying term 1}: $\E[(\|\nabla_W(L_{SL}(B)) - \E[\nabla_W(L_{SL}(B))] \|^2) | \text{case 1}] $

\begin{align}
   \nabla_W(L_{SL}(B)) - \E[\nabla_W(L_{SL}(B))] &= 2\mu_0 \mu_0^\top + (\epsilon_{N_1} + \epsilon_{N_2}) \mu_0^\top - 2e_0 \mu_0^\top + \mu_0 (\epsilon_{N_1}^\top + \epsilon_{N_2}^\top) \nonumber \\
    &\quad + \epsilon_{N_1} \epsilon_{N_1}^\top + \epsilon_{N_2} \epsilon_{N_2}^\top - e_0 (\epsilon_{N_1}^\top + \epsilon_{N_2}^\top) \nonumber \\
    &\quad - \big( \mu_0 \mu_0^\top - e_0 \mu_0^\top + \mu_1 \mu_1^\top - e_1 \mu_1^\top \big) \nonumber \\
    &= \mu_0 \mu_0^\top + \epsilon_{N_1} \mu_0^\top + \epsilon_{N_2} \mu_0^\top - e_0 \mu_0^\top + \mu_0 \epsilon_{N_1}^\top + \mu_0 \epsilon_{N_2}^\top \nonumber \\
    &\quad + \epsilon_{N_1} \epsilon_{N_1}^\top + \epsilon_{N_2} \epsilon_{N_2}^\top - e_0 \epsilon_{N_1}^\top - e_0 \epsilon_{N_2}^\top \nonumber \\
    &\quad - \mu_1 \mu_1^\top + e_1 \mu_1^\top
\end{align}


\textit{By symmetry, term 2} i.e. $\E[(\|\nabla_W(L_{SL}(B)) - \E[\nabla_W(L_{SL}(B))] \|^2) | \text{case 2}] $ can be simplified as follows:
\begin{align}
   \nabla_W(L_{SL}(B)) - \E[\nabla_W(L_{SL}(B))]
    &= \mu_1 \mu_1^\top + \epsilon_{N_1} \mu_1^\top + \epsilon_{N_2} \mu_1^\top - e_1 \mu_1^\top + \mu_1 \epsilon_{N_1}^\top + \mu_1 \epsilon_{N_2}^\top \nonumber \\
    &\quad + \epsilon_{N_1} \epsilon_{N_1}^\top + \epsilon_{N_2} \epsilon_{N_2}^\top - e_1 \epsilon_{N_1}^\top - e_1 \epsilon_{N_2}^\top \nonumber \\
    &\quad - \mu_0 \mu_0^\top + e_0 \mu_0^\top
\end{align}

\textit{Simplifying term 3}:
\begin{equation}
    \E[(\|\nabla_W(L_{SL}(B)) - \E[\nabla_W(L_{SL}(B))] \|^2) | \text{case 3}]
\end{equation}

\begin{align}
    \nabla_W(L_{SL}(B)) - \E[\nabla_W(L_{SL}(B))] &= \mu_0 \mu_0^\top + \epsilon_{N_1} \mu_0^\top - e_0 \mu_0^\top +  \mu_0 \epsilon_{N_1}^\top + \epsilon_{N_1} \epsilon_{N_1}^\top - e_0 \epsilon_{N_1}^\top \nonumber \\ 
    &\quad + \mu_1 \mu_1^\top + \epsilon_{N_2} \mu_1^\top - e_1 \mu_1^\top +  \mu_1 \epsilon_{N_2}^\top + \epsilon_{N_2} \epsilon_{N_2}^\top - e_1 \epsilon_{N_2}^\top \nonumber \\
    &\quad - \left(\mu_0 \mu_0^\top - e_0 \mu_0^\top + \mu_1 \mu_1^\top - e_1 \mu_1^\top\right) \nonumber \\
    &= \epsilon_{N_1} \mu_0^\top + \mu_0 \epsilon_{N_1}^\top + \epsilon_{N_1} \epsilon_{N_1}^\top - e_0 \epsilon_{N_1}^\top \nonumber \\
    &\quad + \epsilon_{N_2} \mu_1^\top + \mu_1 \epsilon_{N_2}^\top + \epsilon_{N_2} \epsilon_{N_2}^\top - e_1 \epsilon_{N_2}^\top
\end{align}




\textbf{Analyzing cases for SSL}

We will now analyze the same for SSL and show by comparing each of the 3 terms, element-wise, above to their counterparts that $Var(L_{SSL}(B)) > Var(L_{SL}(B))$

From \cite{xue2023features}, we have that $L_{SSL}$ can be re-written as
\[
\nabla_{W}L_{SSL}(W) = -\text{Tr}(2\tilde{M}WW^\top) + \text{Tr}(MW^\top W M W^\top W)
\]
where
\[
M = \frac{1}{2m} \sum_{i=1}^{2m} x_i x_i^\top
\]
with $m$ being the number of augmentations, and $M$ represents the covariance of the training data. The matrix $\tilde{M}$ is defined as:
\[
\tilde{M} = \frac{1}{2} \sum_{i=1}^n \left( \frac{1}{m} \sum_{x \in \mathcal{A}(x_i)} x \right) \left( \frac{1}{m} \sum_{x \in \mathcal{A}(x_i)} x^\top \right)
\]
where $\mathcal{A}(x_i)$ denotes the set of augmentations for the sample $x_i$.

As $m \rightarrow \infty$, $M = \tilde{M} = \frac{1}{2} \big( x_1 x_1^T + x_2 x_2^T \big)$. 

Hence, $\nabla_W(L_{SSL}(B)) = - 4 W M + 4 W M W^\top W M$

Substituting $W = I$, we get $\nabla_W(L_{SSL}(B)) = - 4 M + 4 M^2$

\textit{Case 1: Both examples from class 0}

\begin{align}
    M &= \frac{1}{2} \big( x_1 x_1^\top + x_2 x_2^\top \big) \\
    &= \frac{1}{2} \big((\mu_0 + \epsilon_{N_1}) (\mu_0 + \epsilon_{N_1})^\top + (\mu_0 + \epsilon_{N_2}) (\mu_0 + \epsilon_{N_2})^\top \big) \\
    &= \frac{1}{2} \big( \mu_0 \mu_0^\top + \mu_0 \epsilon_{N_1}^\top + \epsilon_{N_1} \mu_0^\top + \epsilon_{N_1} \epsilon_{N_1}^\top \nonumber \\
    &\quad + \mu_0 \mu_0^\top + \mu_0 \epsilon_{N_2}^\top + \epsilon_{N_2} \mu_0^\top + \epsilon_{N_2} \epsilon_{N_2}^\top \big) \\
    &= \mu_0 \mu_0^\top + \frac{1}{2} \big( \mu_0 \epsilon_{N_1}^\top + \epsilon_{N_1} \mu_0^\top + \epsilon_{N_1} \epsilon_{N_1}^\top \nonumber \\
    &\quad + \mu_0 \epsilon_{N_2}^\top + \epsilon_{N_2} \mu_0^\top + \epsilon_{N_2} \epsilon_{N_2}^\top \big)
\end{align}

\begin{align}
M^2 &= \big( \mu_0 \mu_0^\top + \frac{1}{2} \big( \mu_0 \epsilon_{N_1}^\top + \epsilon_{N_1} \mu_0^\top + \epsilon_{N_1} \epsilon_{N_1}^\top \\
&\quad + \mu_0 \epsilon_{N_2}^\top + \epsilon_{N_2} \mu_0^\top + \epsilon_{N_2} \epsilon_{N_2}^\top \big) \big)^2 \\ 
&= \mu_0 \mu_0^\top \mu_0 \mu_0^\top \nonumber \\
&\quad + \mu_0 \mu_0^\top \cdot \frac{1}{2} \left( \mu_0 \epsilon_{N_1}^\top + \epsilon_{N_1} \mu_0^\top + \epsilon_{N_1} \epsilon_{N_1}^\top \right) \nonumber \\
&\quad + \mu_0 \mu_0^\top \cdot \frac{1}{2} \left( \mu_0 \epsilon_{N_2}^\top + \epsilon_{N_2} \mu_0^\top + \epsilon_{N_2} \epsilon_{N_2}^\top \right) \nonumber \\
&\quad + \frac{1}{2} \left( \mu_0 \epsilon_{N_1}^\top + \epsilon_{N_1} \mu_0^\top + \epsilon_{N_1} \epsilon_{N_1}^\top \right) \mu_0 \mu_0^\top \nonumber \\
&\quad + \frac{1}{2} \left( \mu_0 \epsilon_{N_2}^\top + \epsilon_{N_2} \mu_0^\top + \epsilon_{N_2} \epsilon_{N_2}^\top \right) \mu_0 \mu_0^\top \nonumber \\
&\quad + \frac{1}{4} \left( \mu_0 \epsilon_{N_1}^\top + \epsilon_{N_1} \mu_0^\top + \epsilon_{N_1} \epsilon_{N_1}^\top \right)^2 \nonumber \\
&\quad + \frac{1}{4} \left( \mu_0 \epsilon_{N_2}^\top + \epsilon_{N_2} \mu_0^\top + \epsilon_{N_2} \epsilon_{N_2}^\top \right)^2 \nonumber \\
&\quad + \frac{1}{4} \left( \mu_0 \epsilon_{N_1}^\top + \epsilon_{N_1} \mu_0^\top + \epsilon_{N_1} \epsilon_{N_1}^\top \right) \left( \mu_0 \epsilon_{N_2}^\top + \epsilon_{N_2} \mu_0^\top + \epsilon_{N_2} \epsilon_{N_2}^\top \right) \nonumber \\
&\quad + \frac{1}{4} \left( \mu_0 \epsilon_{N_2}^\top + \epsilon_{N_2} \mu_0^\top + \epsilon_{N_2} \epsilon_{N_2}^\top \right) \left( \mu_0 \epsilon_{N_1}^\top + \epsilon_{N_1} \mu_0^\top + \epsilon_{N_1} \epsilon_{N_1}^\top \right)
\end{align}

\begin{align}
\E[ - 4 M + 4 M  M] &= - 4 \mu_0 \mu_0^\top + 4 \mu_0 \mu_0^\top \mu_0 \mu_0^\top
\end{align}

\textit{Case 2: Both examples from class 1}

By symmetry, 
\begin{align}
    M &= \mu_1 \mu_1^\top + \frac{1}{2} \big( \mu_1 \epsilon_{N_1}^\top + \epsilon_{N_1} \mu_1^\top + \epsilon_{N_1} \epsilon_{N_1}^\top \nonumber \\
    &\quad + \mu_1 \epsilon_{N_2}^\top + \epsilon_{N_2} \mu_1^\top + \epsilon_{N_2} \epsilon_{N_2}^\top \big)
\end{align}

\begin{align}
\E[ - 4 M + 4 M  M] &= - 4 \mu_1 \mu_1^\top + 4 \mu_1 \mu_1^\top \mu_1 \mu_1^\top
\end{align}

\textit{Case 3: 1 example from each class}

\begin{align}
    M &= \frac{1}{2} \big( x_1 x_1^\top + x_2 x_2^\top \big) \\
    &= \frac{1}{2} \big((\mu_0 + \epsilon_{N_1}) (\mu_0 + \epsilon_{N_1})^\top + (\mu_1 + \epsilon_{N_2}) (\mu_1 + \epsilon_{N_2})^\top \big) \\
    &= \frac{1}{2} \big( \mu_0 \mu_0^\top + \mu_0 \epsilon_{N_1}^\top + \epsilon_{N_1} \mu_0^\top + \epsilon_{N_1} \epsilon_{N_1}^\top \nonumber \\
    &\quad + \mu_1 \mu_1^\top + \mu_1 \epsilon_{N_2}^\top + \epsilon_{N_2} \mu_1^\top + \epsilon_{N_2} \epsilon_{N_2}^\top \big)
\end{align}

\begin{align}
\E[ - 4 M + 4 M  M] &= - 2 \mu_0 \mu_0^\top  - 2 \mu_1 \mu_1^\top + 2  \mu_0 \mu_0^\top \mu_0 \mu_0^\top + 2 \mu_1 \mu_1^\top \mu_1 \mu_1^\top
\end{align}

\textit{Putting it together}

Hence. 

\begin{align}
    \E[-4M + 4M^2] &= \frac{1}{4} \big(  - 4 \mu_0 \mu_0^\top + 4 \mu_0 \mu_0^\top \mu_0 \mu_0^\top \big) \nonumber \\ 
    &\quad +\frac{1}{4} \big( - 4 \mu_1 \mu_1^\top + 4 \mu_1 \mu_1^\top \mu_1 \mu_1^\top \big) \nonumber \\ 
    &\quad + \frac{1}{2} \big( - 2 \mu_0 \mu_0^\top  - 2 \mu_1 \mu_1^\top + 2  \mu_0 \mu_0^\top \mu_0 \mu_0^\top + 2 \mu_1 \mu_1^\top \mu_1 \mu_1^\top \big) \\
    &= - 2 \mu_0 \mu_0^\top  - 2 \mu_1 \mu_1^\top + 2  \mu_0 \mu_0^\top \mu_0 \mu_0^\top + 2 \mu_1 \mu_1^\top \mu_1 \mu_1^\top
\end{align}

\textbf{Comparing each term, element-wise}

\textit{Comparing Term 1}

\begin{align}
    \nabla_W(L_{SL}(B)) - \E[\nabla_W(L_{SL}(B))] &= \mu_0 \mu_0^\top + \epsilon_{N_1} \mu_0^\top + \epsilon_{N_2} \mu_0^\top - e_0 \mu_0^\top + \mu_0 \epsilon_{N_1}^\top + \mu_0 \epsilon_{N_2}^\top \nonumber \\
    &\quad + \epsilon_{N_1} \epsilon_{N_1}^\top + \epsilon_{N_2} \epsilon_{N_2}^\top - e_0 \epsilon_{N_1}^\top - e_0 \epsilon_{N_2}^\top \nonumber \\
    &\quad - \mu_1 \mu_1^\top + e_1 \mu_1^\top
\end{align}

\begin{align}
    \nabla_W(L_{SSL}(B)) - \E[\nabla_W(L_{SSL}(B))] &= -2 \mu_0 \mu_0^\top -2 \mu_0 \epsilon_{N_1}^\top -2 \epsilon_{N_1} \mu_0^\top -2 \epsilon_{N_1} \epsilon_{N_1}^\top \nonumber \\
    &\quad -2 \mu_0 \epsilon_{N_2}^\top -2 \epsilon_{N_2} \mu_0^\top -2 \epsilon_{N_2} \epsilon_{N_2}^\top \nonumber \\
    &= 2 \mu_0 \mu_0^\top \mu_0 \mu_0^\top \nonumber \\
    &\quad + \mu_0 \mu_0^\top \cdot 2 \left( \mu_0 \epsilon_{N_1}^\top + \epsilon_{N_1} \mu_0^\top + \epsilon_{N_1} \epsilon_{N_1}^\top \right) \nonumber \\
    &\quad + \mu_0 \mu_0^\top \cdot 2 \left( \mu_0 \epsilon_{N_2}^\top + \epsilon_{N_2} \mu_0^\top + \epsilon_{N_2} \epsilon_{N_2}^\top \right) \nonumber \\
    &\quad + 2 \left( \mu_0 \epsilon_{N_1}^\top + \epsilon_{N_1} \mu_0^\top + \epsilon_{N_1} \epsilon_{N_1}^\top \right) \mu_0 \mu_0^\top \nonumber \\
    &\quad + 2 \left( \mu_0 \epsilon_{N_2}^\top + \epsilon_{N_2} \mu_0^\top + \epsilon_{N_2} \epsilon_{N_2}^\top \right) \mu_0 \mu_0^\top \nonumber \\
    &\quad + 1 \left( \mu_0 \epsilon_{N_1}^\top + \epsilon_{N_1} \mu_0^\top + \epsilon_{N_1} \epsilon_{N_1}^\top \right)^2 \nonumber \\
    &\quad + 1 \left( \mu_0 \epsilon_{N_2}^\top + \epsilon_{N_2} \mu_0^\top + \epsilon_{N_2} \epsilon_{N_2}^\top \right)^2 \nonumber \\
    &\quad + 1 \left( \mu_0 \epsilon_{N_1}^\top + \epsilon_{N_1} \mu_0^\top + \epsilon_{N_1} \epsilon_{N_1}^\top \right) \left( \mu_0 \epsilon_{N_2}^\top + \epsilon_{N_2} \mu_0^\top + \epsilon_{N_2} \epsilon_{N_2}^\top \right) \nonumber \\
    &\quad +1 \left( \mu_0 \epsilon_{N_2}^\top + \epsilon_{N_2} \mu_0^\top + \epsilon_{N_2} \epsilon_{N_2}^\top \right) \left( \mu_0 \epsilon_{N_1}^\top + \epsilon_{N_1} \mu_0^\top + \epsilon_{N_1} \epsilon_{N_1}^\top \right) \label{eq:ssl_term1}
\end{align}

Recall that $\mu_0 = e_0$. 

Use $\epsilon_1$ and $\epsilon_2$ to denote the value of an abtrirary element of $\epsilon_{N_1}$ and $\epsilon_{N_2}$, respectively. 

\textit{Comparing the expectation of the square of element $(0, 0)$ of $\nabla_W(L(B)) - \E[\nabla_W(L(B))]$:}

For SL: $\E[(1 + \epsilon_1 + \epsilon_2 -1 + \epsilon_1 + \epsilon_2 + \epsilon_1^2 + \epsilon_2^2 - \epsilon_1 - \epsilon_2 - 1 + 1)^2] = \E[(\epsilon_1 + \epsilon_2 + \epsilon_1^2 + \epsilon_2^2)] = $

From Eq. \ref{eq:ssl_term1}, it is easy to see that the expectation of the square of element $(0,0)$ is far larger due to far more $\epsilon_1$, $\epsilon_2$, $\epsilon_1^2$ and $\epsilon_2^2$ terms.

Similar argument holds for diagonal terms that are not $(0, 0)$, due to the greater number of terms that only have $\epsilon_1^2$ and $\epsilon_2^2$ (all terms having only $\epsilon_1$ and $\epsilon_2$ are only present in element $(0,0)$ since they are always multiplied with $e_0$). 

An analogous argument holds for off-diagonal elements. Here, the contribution is only due to terms containing only $\epsilon_{N_1}$ and/or $\epsilon_{N_2}$. 

\textit{Comparing Term 2}

By symmetry, it must be that the same holds for term 2. 

\textit{Comparing Term 3}
\begin{align}
    \nabla_W(L_{SL}(B)) - \E[\nabla_W(L_{SL}(B))] &= \epsilon_{N_1} \mu_0^\top + \mu_0 \epsilon_{N_1}^\top + \epsilon_{N_1} \epsilon_{N_1}^\top - e_0 \epsilon_{N_1}^\top \nonumber \\
    &\quad + \epsilon_{N_2} \mu_1^\top + \mu_1 \epsilon_{N_2}^\top + \epsilon_{N_2} \epsilon_{N_2}^\top - e_1 \epsilon_{N_2}^\top
\end{align}

\begin{align}
    \nabla_W(L_{SSL}(B)) - \E[\nabla_W(L_{SSL}(B))] &= -2 \big( \mu_0 \mu_0^\top + \mu_0 \epsilon_{N_1}^\top + \epsilon_{N_1} \mu_0^\top + \epsilon_{N_1} \epsilon_{N_1}^\top \nonumber \\
    &\quad + \mu_1 \mu_1^\top + \mu_1 \epsilon_{N_2}^\top + \epsilon_{N_2} \mu_1^\top + \epsilon_{N_2} \epsilon_{N_2}^\top \big) \nonumber \\
    &\quad + 4  \big( \mu_0 \mu_0^\top + \mu_0 \epsilon_{N_1}^\top + \epsilon_{N_1} \mu_0^\top + \epsilon_{N_1} \epsilon_{N_1}^\top \nonumber \\
    &\quad + \mu_1 \mu_1^\top + \mu_1 \epsilon_{N_2}^\top + \epsilon_{N_2} \mu_1^\top + \epsilon_{N_2} \epsilon_{N_2}^\top \big)^2 
\end{align}

Again, from analogous arugments from \textit{Term 1}, we can see that the expression for $L_{SSL}$ has larger expected square element-wise than that of $L_{SL}$.

\textbf{Conclusion}

Since, for each term (corresponding to the 3 cases of mini-batch), we have that \
\begin{equation}
\E[\norm{\nabla_W(L_{SSL}(B)) - \E[\nabla_W(L_{SSL}(B))]}^2] 
> \E[\norm{\nabla_W(L_{SL}(B)) - \E[\nabla_W(L_{SL}(B))]}^2], 
\end{equation}
Thus, we can conclude 
\[
\mathrm{Var}(\nabla_W L_{SL}(B)) < \mathrm{Var}(\nabla_W L_{SSL}(B)).
\]

\end{proof}

\clearpage

\section{Proposition \ref{prop:var}: Generalized analysis of variance of trajectory under synchronous parallel SGD}\label{app:ext_theory}

\begin{proposition}\label{prop:var} Let \( D = \{(x_i, y_i)\}_{i=1}^N \) be a dataset with \( N \) examples, where \( x_i \) is the \( i \)-th input and \( y_i \in \{1, 2, \ldots, K\} \) is its corresponding class label from \( K \) classes. We assume the data \( x_i \) is generated from a distribution where each example from class \( k \) can be modeled as a d-dimensional ($k \ll d$) 1-hot vector \( e_{k} \) plus some noise \( \epsilon_i \), i.e. \( x_i = e_{y_i} + \epsilon_i \),
where \( \mu_{y_i} \) is the vector corresponding to class \( y_i \) and \( \epsilon_i \sim \mathcal{N}(0, I)  \). Moreover, we consider a linear model \( f_\theta(x) = \theta x\) where $\theta \in \mathbb{R}^{K \times d}$.

The supervised loss function \( L_{SL} \) is defined as a loss function that depends solely on each individual example:
\begin{equation}
L_{SL}(B) = \frac{1}{|B|} \sum_{i \in B} \ell_{SL}(f_\theta(x_i), y_i),
\end{equation}
where \( \ell_{SL} \) is an arbitrary supervised loss function (e.g., MSE, Cross-Entropy, etc.) and \( B \) is a mini-batch of examples.

For SSL, we consider the spectral contrastive loss: 
\begin{equation}
    L_{SSL} = -2 \E_{\substack{x_1, x_2 \sim \mathcal{A}(x_i) \\ x_i \sim B}} \big[ f(x_1)^T f(x_2) \big] + \E_{x_i, x_j \sim \mathcal{A}(B)} \big[ f(x_i)^T f(x_j) \big]^2
\end{equation}
where $\mathcal{A}$ denotes the augmentation that are modeled as: \( \mathcal{A}(x_i) \sim x_i + \epsilon_{\text{aug}} \), where \( \epsilon_{\text{aug}} \sim \mathcal{N}(0, I) \).

We analyze this setting under synchronous parallel SGD, where the number of mini-batches equals the number of parallel threads. Let \( P \) represent a partition of the dataset \( D \) into a set of disjoint mini-batches, each of size \( |B| \). The synchronous parallel SGD update for an arbitrary loss function \( L(B) \) on mini-batch \( B \) at time step \( t \) is  $\theta_{t+1} := \theta_{t} - \alpha \sum_{B \in P} \nabla_{\theta_t} L(B)$.

Let \( \theta \) be the initial parameters and \( \theta_{SL} \), \( \theta_{SSL} \) be the model parameters after a single epoch of training with synchronous parallel using the supervised loss \( L_{SL} \) and the self-supervised loss \( L_{SSL} \), respectively, with a learning rate \( \alpha \). Then, for sufficiently large  batch size $|B|$,
\begin{equation}
\text{Var}_P(\theta_{SL}) < \text{Var}_P(\theta_{SSL}),
\end{equation}
where \( \text{Var}_P \) represents the variance over different partitions \( P \), and for a vector random variable \( X \), we define the scalar variance as $\text{Var}(X) = \mathbb{E}[\|X - \mathbb{E}[X]\|^2] $
\end{proposition}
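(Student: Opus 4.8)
The plan is to rewrite both quantities in terms of the full-epoch gradients and then show that the supervised one is literally constant in the partition while the self-supervised one is not. Write $G_{\mathrm{SL}}(P) := \sum_{B\in P}\nabla_\theta L_{SL}(B)$ and $G_{\mathrm{SSL}}(P) := \sum_{B\in P}\nabla_\theta L_{SSL}(B)$, so that $\theta_{SL} = \theta - \alpha G_{\mathrm{SL}}(P)$ and $\theta_{SSL} = \theta - \alpha G_{\mathrm{SSL}}(P)$; since $\theta$ and $\alpha$ are fixed, $\mathrm{Var}_P(\theta_{SL}) = \alpha^2\,\mathrm{Var}_P(G_{\mathrm{SL}})$ and likewise for SSL, so it suffices to show $\mathrm{Var}_P(G_{\mathrm{SL}}) < \mathrm{Var}_P(G_{\mathrm{SSL}})$, which I would do by proving the former is exactly $0$ and the latter strictly positive (almost surely over the draw of $D$).

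For the supervised side, the key observation is separability: since $L_{SL}(B) = \frac{1}{|B|}\sum_{i\in B}\ell_{SL}(f_\theta(x_i),y_i)$ is an average of per-example terms, for \emph{every} balanced partition $P$ one has $G_{\mathrm{SL}}(P) = \frac{1}{|B|}\sum_{i=1}^N\nabla_\theta\ell_{SL}(f_\theta(x_i),y_i)$, because each example of $D$ lies in exactly one batch of $P$. This does not depend on $P$ at all, so $\mathrm{Var}_P(G_{\mathrm{SL}}) = 0$ regardless of $\ell_{SL}$ and of $|B|$.

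For the self-supervised side, I would reuse the gradient computation from the proof of Theorem~\ref{thm:var_grad_toy}, extended from $|B| = 2$ to general $|B|$ (in the many-augmentation regime): $\nabla_\theta L_{SSL}(B) = -4\theta\widetilde M_B + 4\theta M_B\theta^\top\theta M_B$, where $\widetilde M_B$ and $M_B = \frac{1}{|B|}\sum_{i\in B}A_i$ are the augmentation-averaged second-moment matrices of $B$ and $A_i := \E_{\mathrm{aug}}[\mathcal A(x_i)\mathcal A(x_i)^\top]$. Summing over $P$, the term linear in $\widetilde M_B, M_B$ telescopes exactly as on the supervised side to a $P$-independent matrix, while $\sum_{B\in P} M_B\theta^\top\theta M_B = \frac{1}{|B|^2}\sum_{i,j\text{ in a common batch of }P}A_i\theta^\top\theta A_j$; separating the $i=j$ contribution (again $P$-independent) leaves $G_{\mathrm{SSL}}(P) = C + \frac{4}{|B|^2}\theta\sum_{i\neq j}\mathbf 1[\,i,j\text{ co-located in }P\,]\,A_i\theta^\top\theta A_j$ for a fixed $C$. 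Over a uniformly random balanced partition the co-location indicators are exchangeable with a non-degenerate covariance structure, so $\mathrm{Var}_P(G_{\mathrm{SSL}})$ is a quadratic form in those covariances with matrix coefficients $A_i\theta^\top\theta A_j$, and it vanishes only under an exact cancellation forcing (essentially) all these coefficients to agree. Since $x_i = e_{y_i} + \epsilon_i$ with i.i.d.\ Gaussian $\epsilon_i$, the $A_i$ are almost surely pairwise distinct, so this degeneracy fails for generic $\theta\neq 0$; hence $\mathrm{Var}_P(G_{\mathrm{SSL}}) > 0 = \mathrm{Var}_P(G_{\mathrm{SL}})$, which gives the claim.

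The two conceptual points — a per-example loss has a partition-independent epoch gradient, whereas the degree-$\ge 2$ batch term in the contrastive loss produces gradient contributions that couple examples pairwise and therefore depend on which examples land in a common batch — are short. The technical work will be (i) pinning down the closed form of $\nabla_\theta L_{SSL}(B)$ at general batch size and carefully isolating which finite-$|B|$ (and finite-augmentation) pieces are in fact partition-independent, and (ii) upgrading ``the $A_i\theta^\top\theta A_j$ are not all equal'' to a genuine positive lower bound on $\mathrm{Var}_P(G_{\mathrm{SSL}})$; step (ii) is where the ``sufficiently large $|B|$'' hypothesis should enter, ensuring the partition-sensitive pairwise term dominates rather than cancels against the remaining $O(1/|B|)$ fluctuation terms. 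I expect (ii) — the bookkeeping that controls the partition-covariance of the pairwise term against these error terms — to be the main obstacle.
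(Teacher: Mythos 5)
Your proposal is correct in outline and its first half coincides exactly with the paper's argument: both reduce to comparing the summed epoch gradients $G(P)=\sum_{B\in P}\nabla_\theta L(B)$, and both observe that separability of $L_{SL}$ over examples makes $G_{\mathrm{SL}}(P)$ literally independent of the partition, so $\mathrm{Var}_P(\theta_{SL})=0$. Where you diverge is the SSL half. The paper takes the cheapest possible route to positivity: it invokes the same closed form $\nabla_\theta L_{SSL}=-4\theta M+4\theta M\theta^\top\theta M$ (from Xue et al., in the many-augmentation limit) and then simply \emph{exhibits two explicit partitions} --- one whose batches mix one example per class (so $M\approx I$) and one whose batches are drawn from a single class (so $M$ is concentrated on one coordinate) --- and notes the resulting gradients differ; two distinct outcomes with positive probability already force $\mathrm{Var}_P(\theta_{SSL})>0$. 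You instead decompose $G_{\mathrm{SSL}}(P)$ into partition-independent pieces plus the pairwise co-location term $\frac{1}{|B|^2}\sum_{i\neq j}\mathbf{1}[i,j\text{ co-located}]\,A_i\theta^\top\theta A_j$ and argue its variance over a uniformly random partition is generically nonzero because the $A_i$ are a.s.\ distinct. Your route is more informative --- it identifies \emph{exactly which} term of the contrastive gradient carries the partition dependence and would yield a quantitative lower bound, and it is the only one of the two arguments in which the ``sufficiently large $|B|$'' hypothesis plays any visible role --- but it leaves substantially more bookkeeping open (the covariance of the exchangeable co-location indicators and the non-degeneracy of the resulting quadratic form), none of which the paper needs. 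If you only want the stated strict inequality, you can short-circuit your step (ii) entirely by borrowing the paper's two-witness construction; your decomposition then becomes a clean way to see \emph{why} those two witnesses give different gradients.
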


\begin{proof}
For synchronous parallel SGD, we have:
$$
\theta_{t+1} = \theta_{t} - \alpha \sum_{B \in P} \nabla_{\theta_t} L(B),
$$
where $L$ is an arbitrary loss function.

\textbf{Variance of Trajectory in Supervised Learning}

Consider two arbitrary partitions $P_1$ and $P_2$. Let $\theta$ represent the initial parameters, and let $\theta_{SL_1}$ and $\theta_{SL_2}$ represent the parameters after a single epoch of synchronous parallel SGD for supervised learning using $L_{SL}$ on the partitions $P_1$ and $P_2$, respectively.

We can express the updates as:
\begin{align}
    \theta_{SL_1} &= \theta + \alpha \sum_{B \in P_1} \nabla_{\theta} L_{SL}(B) \\
             &= \theta + \alpha \sum_{B \in P_1} \sum_{(x_i, y_i) \in B} \frac{1}{|B|} \nabla_{\theta} L_{SL}(\{x_i, y_i\}) \ \text{(due to independence of loss for each example)} \\
             &= \theta + \alpha \sum_{(x_i, y_i) \in D}  \frac{1}{|B|} \nabla_\theta L_{SL}(\{x_i, y_i\}) \\
             &= \theta + \alpha \sum_{B \in P_2} \sum_{(x_i, y_i) \in B} \frac{1}{|B|} \nabla_{\theta} L_{SL}(\{x_i, y_i\}) \\
             &= \theta_{SL_2}.
\end{align}

Since $\theta_{SL_1} = \theta_{SL_2}$ for arbitrary partitions $P_1$ and $P_2$, we can conclude that:
$$
\text{Var}_P(\theta_{SL}) = 0.
$$

\textbf{Variance of Trajectory in Self-Supervised Learning}

Next, we need to show that $\text{Var}_P(\theta_{SSL}) > 0$. Let $\theta$ represent the initial parameters, and let $\theta_{SSL_1}$ and $\theta_{SSL_2}$ represent the parameters after a single epoch of synchronous parallel SGD using the self-supervised loss $L_{SSL}$ with partitions $P_1$ and $P_2$, respectively.

To show $\text{Var}_P(\theta_{SSL}) > 0$, it suffices to show that there exist partitions $P_1$ and $P_2$ such that $\theta_{SSL_1} \neq \theta_{SSL_2}$. 

Consider the following two partitions:
\begin{itemize}
    \item Let $P_1$ be a partition where each mini-batch is constructed by iterating over classes, until $|B|$ examples are selected, and picking a single example from the remaining examples in each class 
    \item Let $P_2$ be a partition where each mini-batch is constructed by picking examples from the remaining examples each class until $|B|$ examples 
\end{itemize}

It is easy to see, from prior work in hard negative mining for contrastive SSL such as \cite{robinson2021contrastive}, that $P_2$ will have a different gradient across all batches from $P_1$ as $P_2$ only contrasts examples within the same class, whereas $P_1$ contrasts examples across different classes. Hence, $\theta_{SSL_1} \neq \theta_{SSL_2}$. 

We will now prove this for our setting, considering two random partitions $P_1$ and $P_2$.

\cite{xue2023features} shows the $L_{SSL}$ can be rewritten as $L_{SSL} = -Tr(2 M \theta \theta^T) + Tr(M' \theta^T \theta M' \theta^T \theta)$

where $M = \frac{1}{m |B|} \sum_{i=1}^{m |B|} x_i x_i^T$, $M' = \frac{1}{|B|} \sum_{i=1}^{|B|} \big( \frac{1}{m} \sum_{x \in \mathcal{A}(x_i)} x \big) \big( \frac{1}{m} \sum_{x \in \mathcal{A}(x_i)} x \big)^T$ where $m$ is the number of augmentations sampled. 

From \cite{xue2023features}, we also have $\nabla_{\theta}(L_{SSL}) = - 4 \theta M + 4 \theta M' \theta^T \theta M'$

With $m$ i.e. number of augmentations large enough, $M = M'$. 

We now refer to $M$ for $P_1$ as $M_1$ and $M$ for $P_2$ as $M_2$. 

From the construction of $P_1$, we have $M_1 = I$ and from the construction of $P_2$, we have $[M_2]_{i, j} = \delta_{i, k} \delta_{j, k}$ where $\delta_{i, j} = \mathbf{1}[i = j]$ for batch where examples from class $k$ are selected.

From this, we can see that, the $\nabla_{\theta}(L_{SSL})$ is not equal for $P_1$ and $P_2$. 

Hence, we can conclude that $\theta_{{SSL}_1} \neq \theta_{{SSL}_2}$
\end{proof}

\clearpage 


\section{\rev{Examples from \method and KRR-ST \citep{lee2023selfsupervised}}}

\rev{Here, we show examples from CIFAR10, CIFAR100 and TinyImageNet for the data distilled using \method and KRR-ST \citep{lee2023selfsupervised}.}

\begin{figure}[h]
    \begin{subfigure}{0.49\textwidth}
    \includegraphics[width=\linewidth]{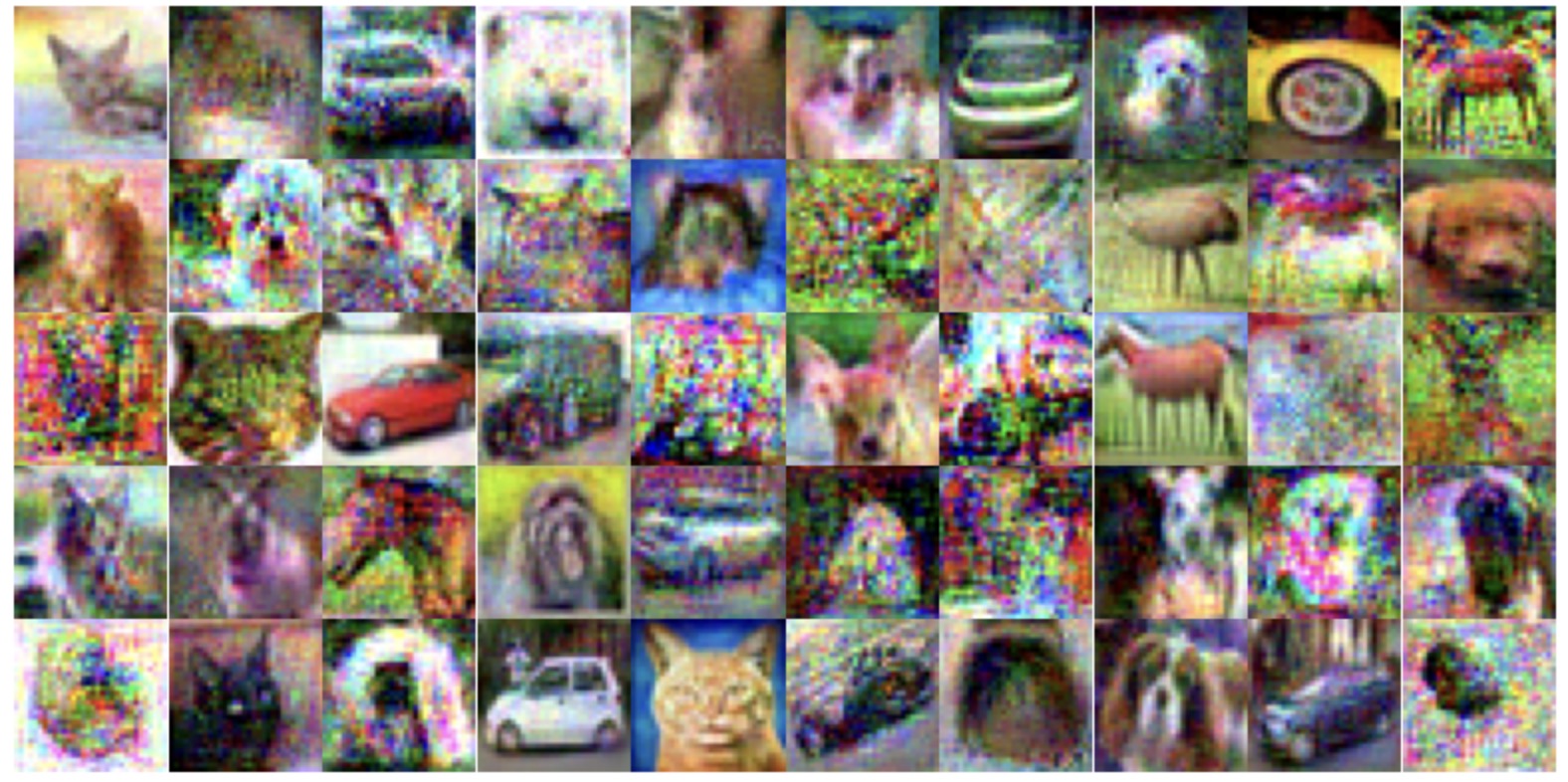}
    \caption{\rev{MKDT}}
    \label{fig:c10_mkdt}
    \end{subfigure}
    \begin{subfigure}{0.49\textwidth}
    \includegraphics[width=\linewidth]{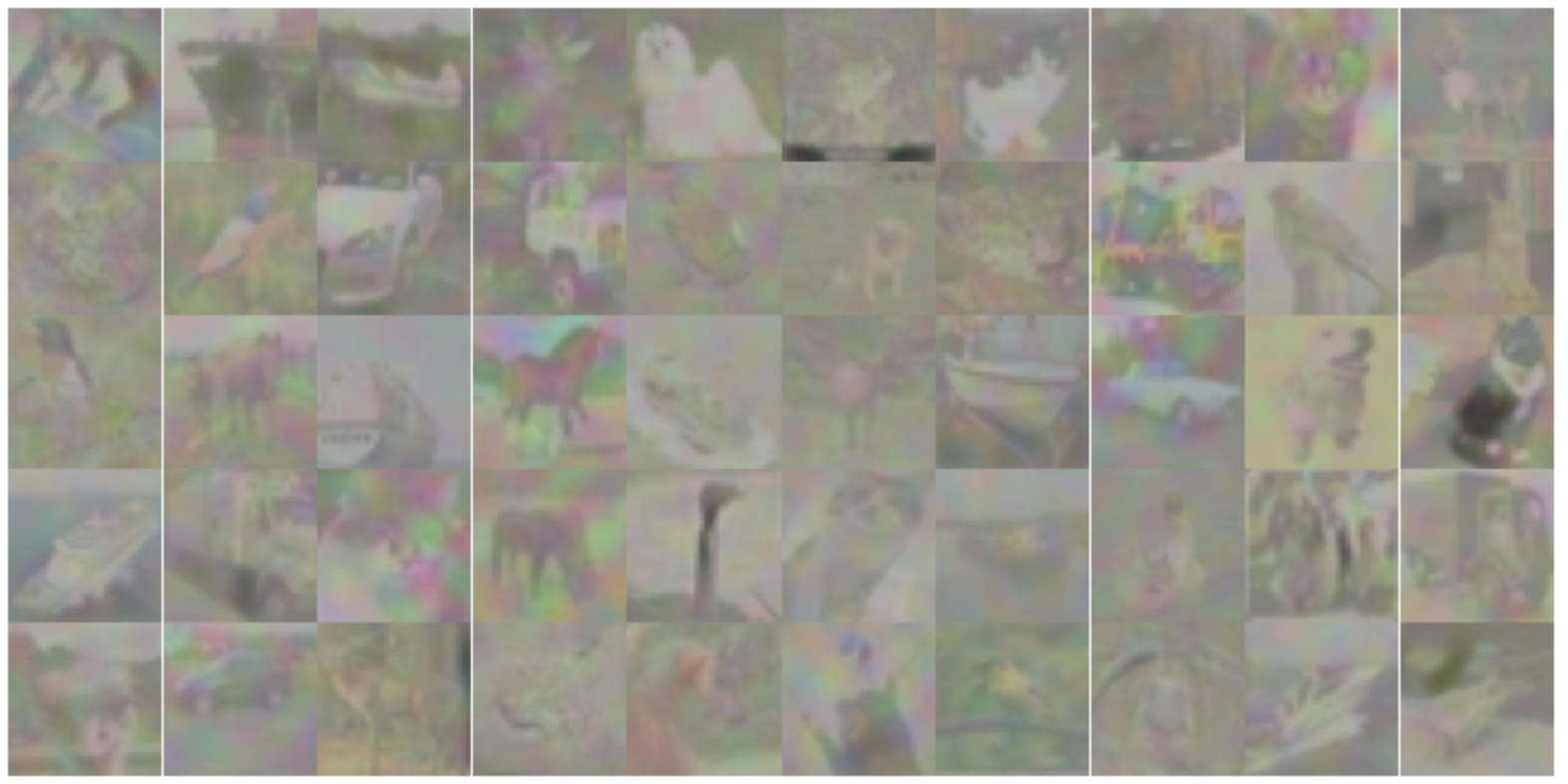}
    \caption{\rev{KRR-ST}}
    \label{fig:c10_krrst}
    \end{subfigure}
    \caption{\rev{Examples of Distilled Images for CIFAR10}}
\end{figure}

\begin{figure}[h]
    \begin{subfigure}{0.49\textwidth}
    \includegraphics[width=\linewidth]{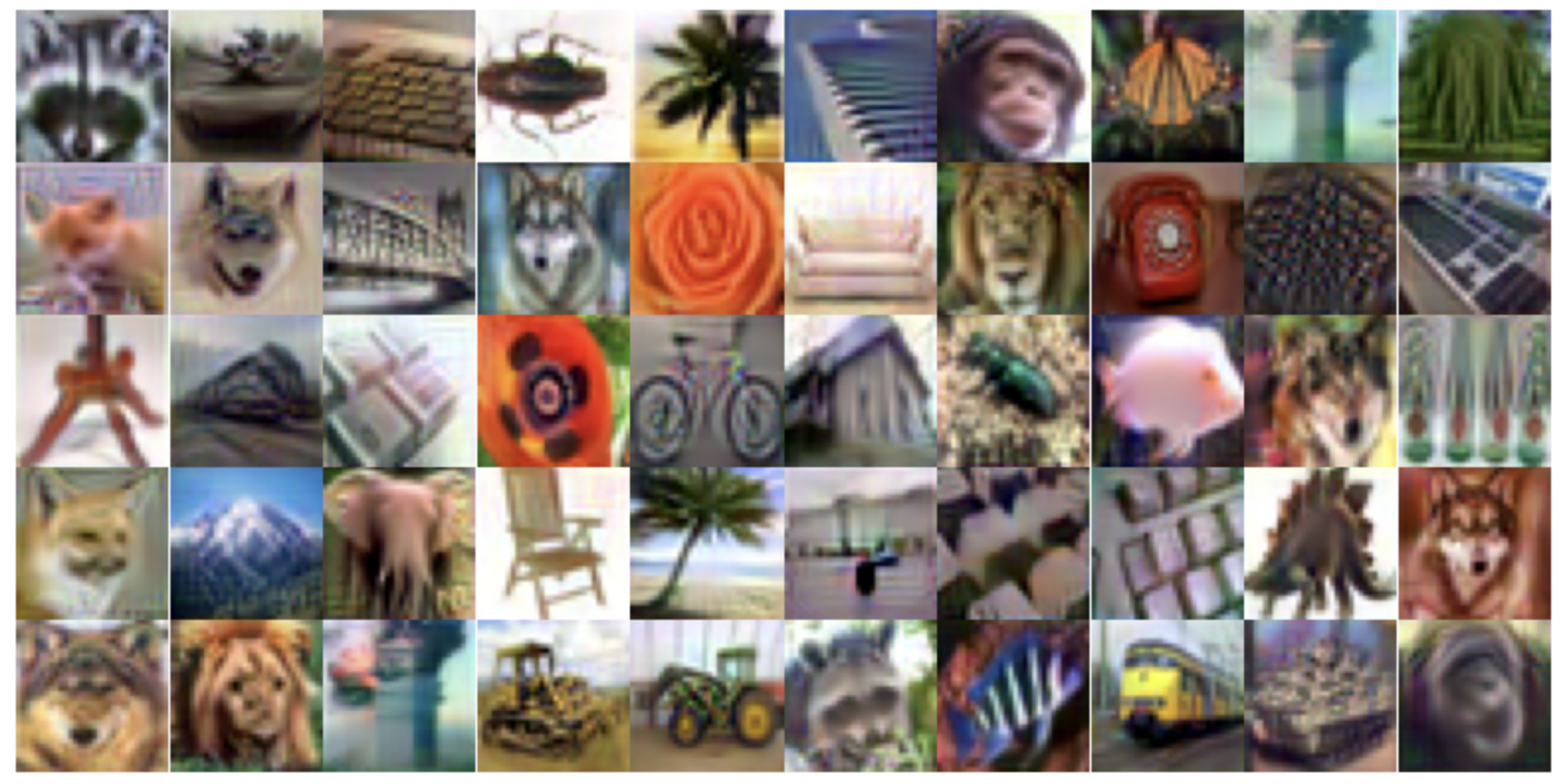}
    \caption{\rev{MKDT}}
    \label{fig:c100_mkdt}
    \end{subfigure}
    \begin{subfigure}{0.49\textwidth}
    \includegraphics[width=\linewidth]{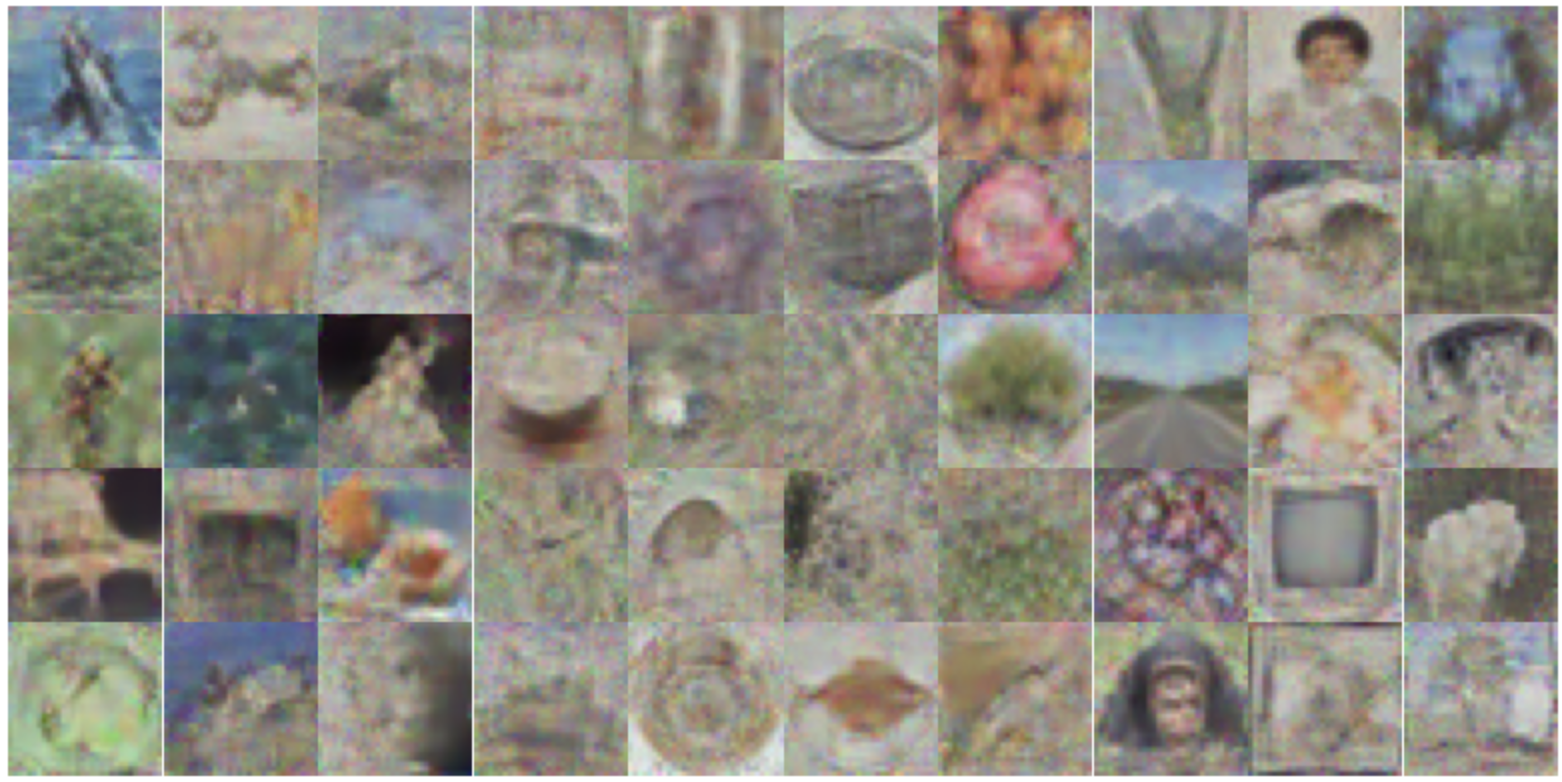}
    \caption{\rev{KRR-ST}}
    \label{fig:c100_krrst}
    \end{subfigure}
    \caption{\rev{Examples of Distilled Images for CIFAR100}}
\end{figure}

\begin{figure}[h]
    \begin{subfigure}{0.49\textwidth}
    \includegraphics[width=\linewidth]{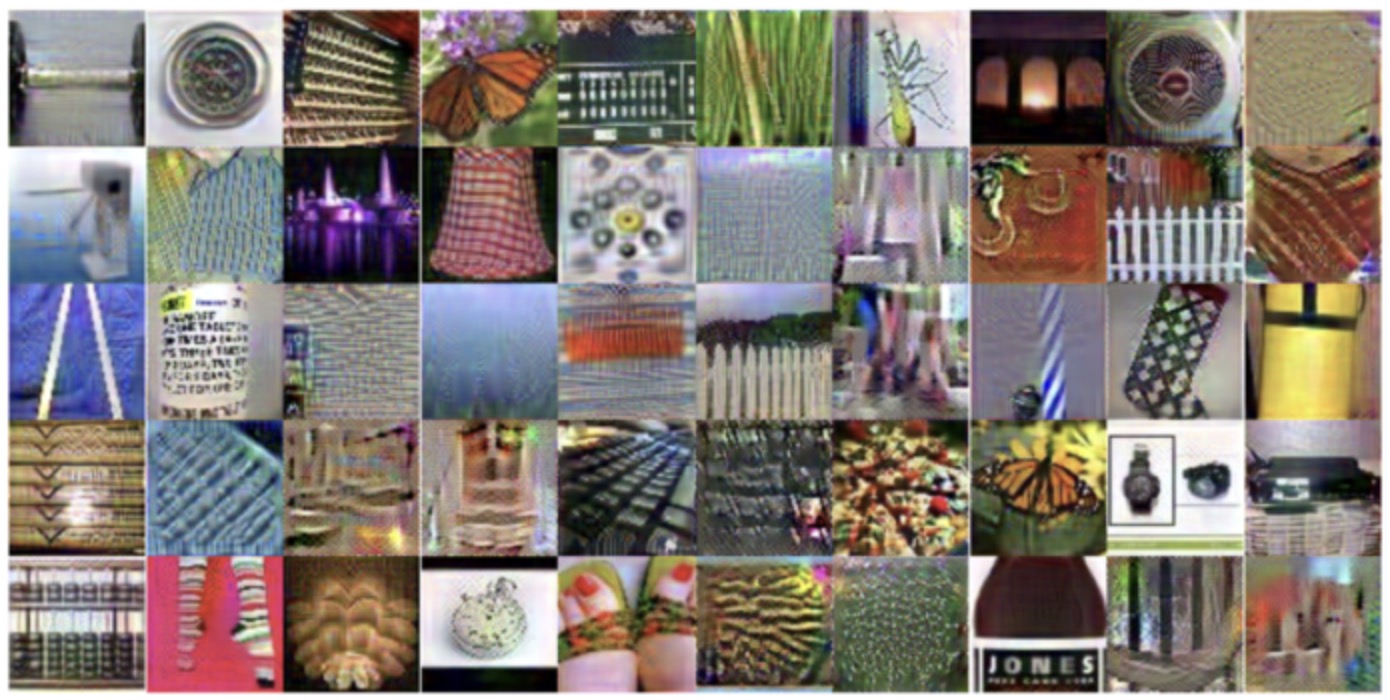}
    \caption{\rev{MKDT}}
    \label{fig:tiny_mkdt}
    \end{subfigure}
    \begin{subfigure}{0.49\textwidth}
    \includegraphics[width=\linewidth]{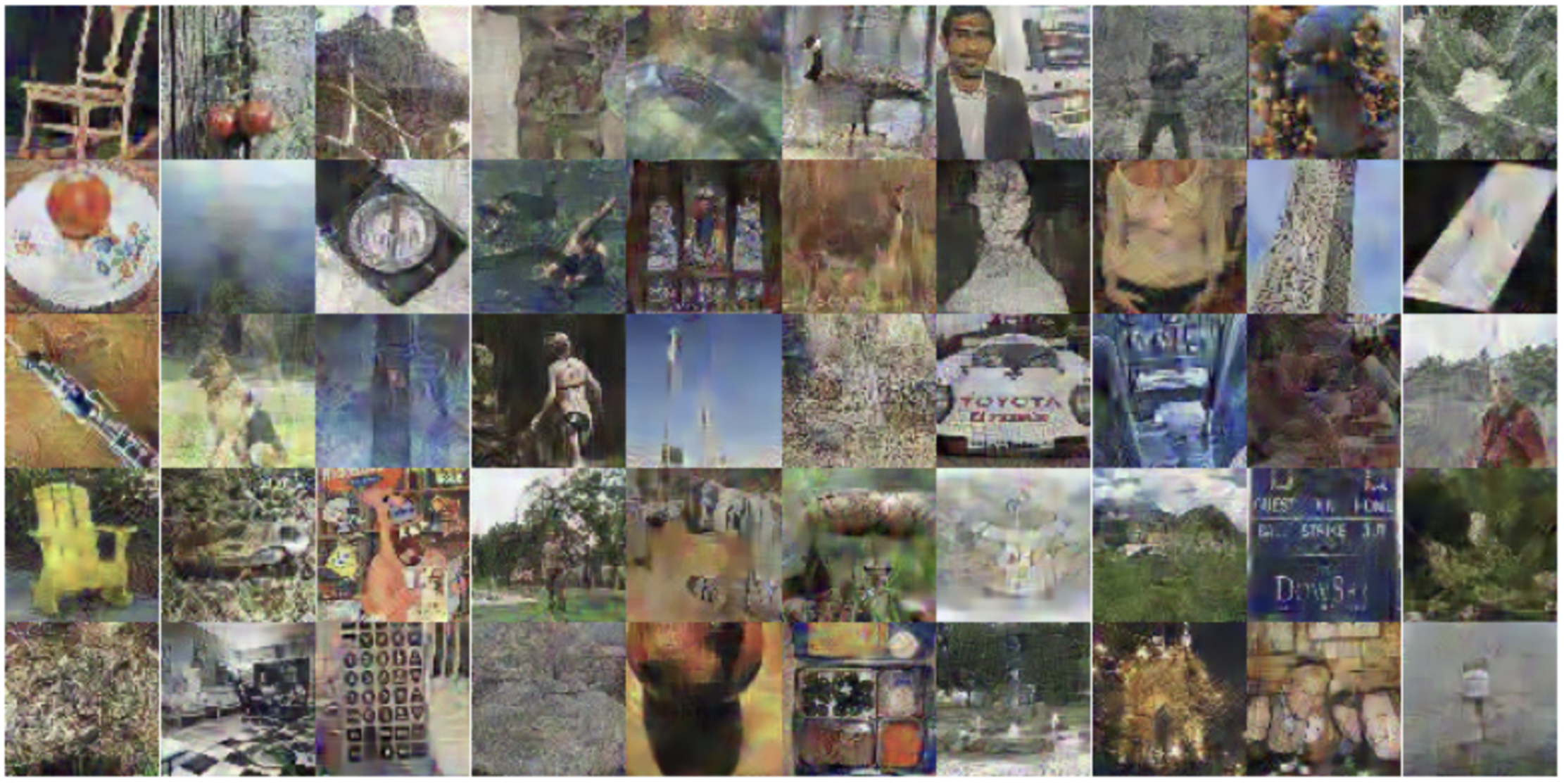}
    \caption{\rev{KRR-ST}}
    \label{fig:tiny_krrst}
    \end{subfigure}
    \caption{\rev{Examples of Distilled Images for TinyImageNet}}
\end{figure}

\section{\rev{DM and MTT-SSL Experiments}}

We also compared our methods with two DD methods adapted to SSL. Specifically, we focused on distilling 5\% of CIFAR 100. The results all show that those methods do not outperform MKDT.

To adapt DM for SSL, we used the same codebase and hyperparameters as DM \citep{dm}. However, during distillation, instead of DM's original approach of sampling random images from a specific class, we modified the process to sample random images from the entire dataset. This adjustment allows us to approximate the behavior of running DM in a label-free setting. The results are shown in \ref{tab:dm_ssl}. 

To adapt MTT to SSL, we modified the supervised learning algorithm in getting the trajectory and training the network in the inner loop of the distillation to be the SimCLR SSL algorithm. Due to the high variance of SSL as analyzed in \ref{sec:mtt_fail}, distillation fails to converge, and the poor results further shows that MTT with SSL does not work. The results are shown in \ref{tab:mtt_ssl}.

\begin{table}[ht]
\centering
\renewcommand{\arraystretch}{1.2} 
\setlength{\tabcolsep}{5pt}      
\begin{tabular}{lccccccc}
\toprule
\textbf{Dataset} & \textbf{CIFAR100} & \textbf{CIFAR10} & \textbf{TinyImageNet} & \textbf{Aircraft} & \textbf{CUB2011} & \textbf{Dogs} & \textbf{Flowers} \\
\midrule
CIFAR10 (1\% Labels)  & $7.09\scriptstyle{\pm 0.16}$  & $35.98\scriptstyle{\pm 0.98}$ & $2.43\scriptstyle{\pm 0.19}$  & $2.15\scriptstyle{\pm 0.21}$ & $1.19\scriptstyle{\pm 0.13}$ & $1.71\scriptstyle{\pm 0.18}$ & $1.60\scriptstyle{\pm 0.17}$ \\
CIFAR10 (5\% Labels)  & $14.77\scriptstyle{\pm 0.10}$ & $46.41\scriptstyle{\pm 0.35}$ & $5.11\scriptstyle{\pm 0.32}$  & $4.94\scriptstyle{\pm 0.88}$ & $1.57\scriptstyle{\pm 0.31}$ & $2.53\scriptstyle{\pm 0.13}$ & $3.10\scriptstyle{\pm 0.33}$ \\
CIFAR100 (1\% Labels) & $8.74\scriptstyle{\pm 0.35}$  & $37.75\scriptstyle{\pm 0.87}$ & $2.75\scriptstyle{\pm 0.22}$  & $2.51\scriptstyle{\pm 0.08}$ & $1.20\scriptstyle{\pm 0.11}$ & $2.07\scriptstyle{\pm 0.20}$ & $2.53\scriptstyle{\pm 0.27}$ \\
CIFAR100 (5\% Labels) & $17.01\scriptstyle{\pm 0.37}$ & $47.32\scriptstyle{\pm 0.38}$ & $6.40\scriptstyle{\pm 0.21}$  & $5.08\scriptstyle{\pm 0.82}$ & $1.99\scriptstyle{\pm 0.13}$ & $2.77\scriptstyle{\pm 0.28}$ & $4.62\scriptstyle{\pm 0.28}$ \\
\bottomrule
\end{tabular}
\caption{Results for Distilling with DM with SSL.}
\label{tab:dm_ssl}
\end{table}

\begin{table}[ht]
\centering
\begin{tabular}{lccccccc}
\toprule
\textbf{Dataset} & \textbf{CIFAR100} & \textbf{CIFAR10} & \textbf{TinyImageNet} & \textbf{Aircraft} & \textbf{CUB2011} & \textbf{Dogs} & \textbf{Flowers} \\
\midrule
CIFAR10 (1\% Labels)  & $4.63\scriptstyle{\pm 0.30}$  & $22.06\scriptstyle{\pm 1.86}$ & $1.11\scriptstyle{\pm 0.16}$  & $1.92\scriptstyle{\pm 0.26}$ & $0.77\scriptstyle{\pm 0.11}$ & $1.38\scriptstyle{\pm 0.28}$ & $1.26\scriptstyle{\pm 0.37}$ \\
CIFAR10 (5\% Labels)  & $7.10\scriptstyle{\pm 0.43}$  & $29.04\scriptstyle{\pm 0.58}$ & $1.90\scriptstyle{\pm 0.17}$  & $2.81\scriptstyle{\pm 0.60}$ & $0.88\scriptstyle{\pm 0.08}$ & $1.82\scriptstyle{\pm 0.19}$ & $1.42\scriptstyle{\pm 0.53}$ \\
CIFAR100 (1\% Labels) & $4.26\scriptstyle{\pm 0.34}$  & $26.06\scriptstyle{\pm 1.58}$ & $1.20\scriptstyle{\pm 0.14}$  & $1.49\scriptstyle{\pm 0.28}$ & $0.87\scriptstyle{\pm 0.26}$ & $1.24\scriptstyle{\pm 0.07}$ & $1.41\scriptstyle{\pm 0.27}$ \\
CIFAR100 (5\% Labels) & $9.66\scriptstyle{\pm 0.24}$  & $38.16\scriptstyle{\pm 0.57}$ & $2.11\scriptstyle{\pm 0.19}$  & $2.83\scriptstyle{\pm 0.41}$ & $1.06\scriptstyle{\pm 0.04}$ & $1.74\scriptstyle{\pm 0.15}$ & $2.38\scriptstyle{\pm 0.19}$ \\
\bottomrule
\end{tabular}
\caption{Results for Distilling with MTT using High Variance SSL Trajectories.}
\label{tab:mtt_ssl}
\end{table}

\clearpage

\section{Societal Impact}

Our work democratizes the development of models trained with SSL pre-training, by reducing the volume of data needed to train such models, and thus the costs of training these models. However, the risk with dataset distillation is augmenting the biases of the original dataset.

\end{document}